\newcolumntype{M}[1]{>{\centering\arraybackslash}m{#1}}
\newcommand{\pscore}{\sbb_{\parallel}}
\newcommand{\oscore}{\sbb_{\perp}}
\newcommand*\diff{\mathop{}\!\mathrm{d}}
\newcommand{\bXb}{X^{\leftarrow}}
\definecolor{darktangerine}{rgb}{1.0, 0.66, 0.07}
\def\HiLi{\leavevmode\rlap{\hbox to .9\hsize{\color{yellow!50}\leaders\hrule height .8\baselineskip depth .4ex\hfill}}}
\definecolor{ForestGreen}{RGB}{34,139,34}
\newcommand{\rbra}[1]{\left( #1 \right)}
\newcommand{\sbra}[1]{\left[ #1 \right]}
\newcommand{\cbra}[1]{\left\{ #1 \right\}}
\newcommand{\pbra}[1]{\left< #1 \right>}
\newcommand{\fs}[2]{\displaystyle\frac{#1}{#2}}
\newcommand{\dabs}[1]{\left \lVert #1 \right\lVert}
\newcommand*{\dif}{\mathop{}\!\mathrm{d}}
\newcommand\DoToC{%
  \startcontents
  \printcontents{}{2}{\textbf{Contents}\vskip3pt\hrule\vskip5pt}
  \vskip3pt\hrule\vskip5pt
}
\newcounter{module}
\newenvironment{module}[1][htb]{%
  \let\c@algorithm\c@module
  \renewcommand{\ALG@name}{Module}% Update algorithm name
  \begin{algorithm}[#1]%
  }{\end{algorithm}
}
 \titlespacing*{\section}{0pt}{-0.03\baselineskip}{-0.025\baselineskip}
\titlespacing*{\subsection}{0pt}{-0.02\baselineskip}{-0.015\baselineskip}
\titlespacing*{\subsubsection}{0pt}{-0.025\baselineskip}{-0.025\baselineskip}
\title{Gradient Guidance for Diffusion Models: \\ An Optimization Perspective}
\author{
Yingqing Guo \thanks{Equal contribution. Department of Electrical and Computer Engineering, Princeton University. Authors’ emails are: \texttt{\{yg6736, huiyuan, yy1325, minshuochen, mengdiw\}@princeton.edu}. }
\And Hui Yuan \footnotemark[1] \And Yukang Yang \And Minshuo Chen \And Mengdi Wang
\AND Princeton University
}
\begin{document}

\maketitle

\begin{abstract}
  Diffusion models have demonstrated empirical successes in various applications and can be adapted to task-specific needs via guidance. This paper studies a form of gradient guidance for adapting a pre-trained diffusion model towards optimizing user-specified objectives. We establish a mathematical framework for guided diffusion to systematically study its optimization theory and algorithmic design. Our theoretical analysis spots a strong link between guided diffusion models and optimization: gradient-guided diffusion models are essentially sampling solutions to a regularized optimization problem, where the regularization is imposed by the pre-training data. As for guidance design, directly bringing in the gradient of an external objective function as guidance would {\it jeopardize} the structure in generated samples. We investigate a modified form of gradient guidance based on a forward prediction loss, which leverages the information in pre-trained score functions and provably preserves the latent structure. We further consider an iteratively fine-tuned version of gradient-guided diffusion where guidance and score network are both updated with newly generated samples. This process mimics a first-order optimization iteration in expectation, for which we proved $\tilde{\mathcal{O}}(1/K)$ convergence rate to the global optimum when the objective function is concave. Our code will be released at \href{https://github.com/yukang123/GGDMOptim.git}{https://github.com/yukang123/GGDMOptim.git}.
\end{abstract}

\section{Introduction}\label{sec:intro}
Diffusion models have emerged as a significant advancement in the field of generative artificial intelligence, offering state-of-the-art performance in image generation \citep{song2019generative, song2020denoising, dhariwal2021diffusion}. 
These models operate by gradually transforming a random noise into a structured output, utilizing the score function learned from data.
One of the key advantages of diffusion models is their flexibility which allows controlled sample generation for task-specific interest, excelling diffusion models in a wide range of applications, such as content creation, sequential decision making, protein engineering \citep{kong2020diffwave, ajay2022conditional, gruver2023protein}. 

Controlling the generation of large generative models stands at the forefront of AI. Guidance and fine-tuning are two most prevalent approaches for controlling the generation of diffusion models. Unlike fine-tuning which changes the weights of pre-trained models, guidance mechanism enables a more directed and flexible control. 
Adding gradient-based guidance during inference was pioneered by classifier guidance \citep{song2020score, dhariwal2021diffusion}, which involves training a time-dependent classifier. Diffusion Posterior Sampling (DPS) \citep{chung2022diffusion} introduced a fully training-free form of gradient-based guidance, which removes the dependence on time. This method has since been explored in numerous empirical studies \cite{chung2022improving,kawar2022denoising, lugmayr2022repaint,wang2022zero,song2023loss, yu2023freedom, bansal2023universal}. However, despite these empirical successes, significant gaps remain in the theoretical understanding and guarantees of gradient-based guidance in diffusion models.

\paragraph{Problem and Challenges} Suppose we have a pre-trained diffusion model that can generate new samples faithfully, maintaining the latent structure of data. We study the problem of adapting this diffusion model to generate new samples that optimize task-specific objectives, while {\it maintaining the learned structure} in new samples. This problem has a strong connection to classic optimization, guided diffusion offers new possibilities to optimize complex design variables such as images, videos, proteins, and genomes \citep{black2023training, watson2023novo, liu2024sora} in a generative fashion. More comprehensive exposure to this middle ground can be found in recent surveys \citep{yang2023diffusion,chen2024overview,guo2023diffusion}.

Given the optimization nature of this problem, it's critical to answer the following theoretical questions from an optimization perspective:
 \textit{(i)} Why doesn't simply applying the gradient of the objective function w.r.t. the noised sample work? 
 \textit{(ii)} How to add a guidance signal to improve the target objective without compromising the sample quality?  
 \textit{(iii)} Can one guarantee the optimization properties of new samples generated by guided diffusion?  \textit{(iv)} What are the limits of adaptability in these guided models? 

\begin{figure}[!htb]
    \centering
    \vspace{-5pt}
    \includegraphics[width = 0.85\textwidth]{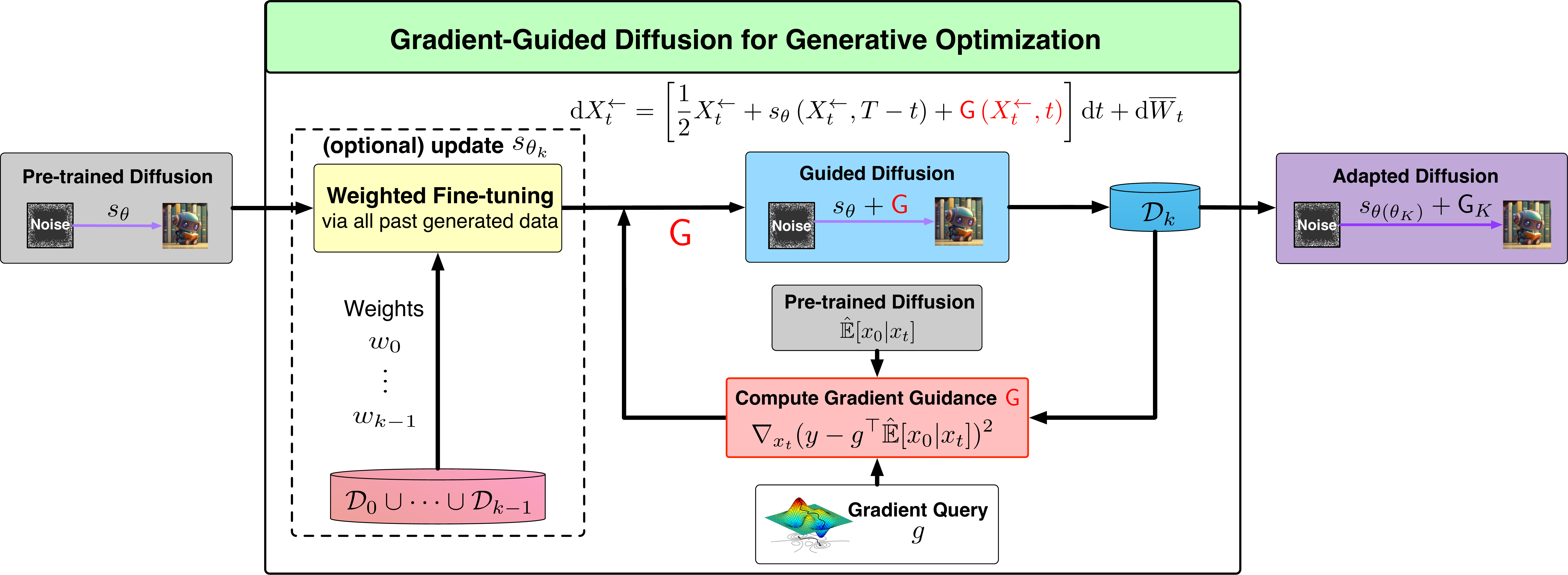}
    \caption{\small \textbf{Gradient-guided diffusion model for generative optimization, with or without adaptive finetuning.} A pre-trained diffusion model is guided with an additional gradient signal from an external objectives function towards generating near-optimal solutions. 
    }
    \vspace{-2pt}
    \label{fig:framework}
\end{figure}

\vspace{-10pt}
\paragraph{Scope and Contribution.} In this paper, we investigate guided diffusion from an optimization perspective. To answer the four questions above, we propose an algorithmic framework, see Figure \ref{fig:framework} for an illustration. Our main contributions are summarized as follows:

\vspace{-3pt}
\noindent $\bullet$ \textbf{Study structure-preserving guidance.} We study the design of guidance under structural data distribution belonging to a latent low-dimensional subspace (\cref{asmp:subspace_data}). We diagnose the failure of naive gradient guidance and study the theoretical aspects of guidance based on forward prediction loss (Definition~\ref{def:gradient_guidance+}), which provably preserves any low-dimensional subspace structure (\cref{thm:faith_grad_guidance}). 

\vspace{-3pt}
\noindent $\bullet$  \textbf{Establish a mathematical framework of guided diffusion.} We build a mathematical framework for guided diffusion, which facilitates algorithm analysis and theory establishment. We propose and analyze an iterative guided diffusion using gradient queries on new samples (Algorithm \ref{alg:main}; Figure \ref{fig:framework} with fine-tuning block off). We give the first convergence theory showing generated samples converge to a regularized optimal solution w.r.t the objective (Theorem~\ref{thm:fully linear}) with linear score class \eqref{equ:fully linear class}. The regularization is imposed by the pre-trained diffusion model, revealing a fundamental limit of adapting pre-trained diffusion models with guidance. 

\vspace{-3pt}
\noindent $\bullet$ \textbf{Provide rate-matching optimization theory.} Furthermore, we propose an adaptive gradient-guided diffusion, where both pre-trained score network and guidance are iteratively updated using self-generated samples (Algorithm~\ref{alg:main_w_update}; Figure \ref{fig:framework} with the fine-tuning block turned on). We show in expectation its iteration converges to a global optima within the latent subspace, at a rate of $\tilde{\cO}(1/K)$ (Theorems~\ref{thm:w. update w. subspace}, $K$ is $\#$ of iterations), matching the classical convergence in convex optimization.

\vspace{-3pt}
\noindent $\bullet$ \textbf{Provide experimental justification.}  Simulation and image experiments are provided in \cref{sec:experiment} to support theoretical findings on latent structure-preserving and optimization convergence.

\section{Related Works}
To summarize the related work, we first give an overview of empirical studies relevant to our objective. We then discuss the theory of diffusion models, to which our main contribution is focused. Other related topics, such as direct latent optimization in diffusion models and a detailed review of sampling and statistical theory of diffusion models, are deferred to \cref{sec:r_w}.

\paragraph{Classifier Guidance and Training-free Guidance.} \citep{dhariwal2021diffusion} introduced classifier-based guidance, steering pre-trained diffusion models towards a particular class during inference. This method offers flexibility by avoiding task-specific fine-tuning, but still requires training a time-dependent classifier. Training-free guidance methods \cite{chung2022diffusion, song2023loss, yu2023freedom, bansal2023universal, he2023manifold, pan2023towards, garber2024image} eliminate the need for a time-dependent classifier, using only off-the-shelf loss guidance during inference. \cite{chung2022diffusion, song2023loss, he2023manifold, garber2024image} is a line of works solving inverse problems on image and \cite{yu2023freedom, pan2023towards} aims for guided/conditional image generation. Though not originally developed for solving optimization problems, \cite{chung2022diffusion, yu2023freedom} both propose a similar guidance to ours: taking gradient on the predicted clean data $x_0$ with respect to corrupted $x_t$. Differently,  our paper presents the first rigorous theoretical analysis of this gradient-based guidance approach. Furthermore, we propose an algorithm that iteratively applies the guidance as a module to the local linearization of the optimization objective, demonstrating provable convergence guarantees.

\paragraph{Fine-tuning of Diffusion Models.} Several methods for fine-tuning diffusion models to optimize downstream reward functions include RL-based fine-tuning \citep{black2023training,fan2023dpok} and direct backpropagation to rewards \citep{clark2023directly,prabhudesai2023aligning,xu2023imagereward,uehara2024fine}. However, these approaches often suffer from high computational costs and catastrophic forgetting in pre-trained models. Our guidance method is training-free and applied during the inference phase, eliminating the need to fine-tune diffusion models.

\paragraph{Theory of Diffusion Models.} Current theory works primarily focus on unconditional diffusion models. Several studies demonstrate that the distributions generated by diffusion models closely approximate the true data distribution, provided the score function is accurately estimated \citep{de2021diffusion, albergo2023stochastic, block2020generative, lee2022convergencea, chen2022sampling, lee2022convergenceb,chen2023restoration, chen2023probability, benton2023linear}. For conditional diffusion models, \citep{yuan2023reward, fu2024unveil} establish sample complexity bounds for learning generic conditional distributions. Our novel analysis establishes a connection between the sampling process in gradient-based guided diffusion and a proximal gradient step,  providing convergence guarantees.

\section{Preliminaries: Diffusion Models and Guidance} 
\label{sec:preliminaries}

Score-based diffusion models capture the distribution of pre-training data by learning a sequence of transformations to generate new samples from noise \citep{song2020score}.  A diffusion model comprises a forward and a backward process, for which we give a review as follows. 

%\vspace{-8pt}
\paragraph{Forward Process.} The forward process progressively adds noise to data, and then the sample trajectories are used to train the score function. The forward process initializes with $X_0 \in \mathbb{R}^{D}$, a random variable drawn from the pre-training data $\mathcal{D}$. It introduces noise via an Ornstein-Uhlenbeck process, i.e.,
\begin{equation}\label{eq:forward_sde}
\diff X_t = -\frac{1}{2} q(t) X_t \diff t + \sqrt{q(t)} \diff W_t ~~~ \text{for} ~~ q(t) > 0,
\end{equation}
where $(W_t)_{t\geq 0}$ is Wiener process, and $q(t)$ is  non-decreasing.  $X_t$ represents the noise-corrupted data distribution at time $t$. The conditional distribution $X_t | X_0 = x_0$ is Gaussian, i.e., $\cN (\alpha(t) x_0, h(t)I_D)$ with $\alpha(t) = \exp(-\int_0^t \frac{1}{2} q(s) ds)$ and $h(t) = 1 - \alpha^2(t)$. In practice, the forward process will terminate at a large time $T$ so that 
the marginal distribution of $X_T$ is close to $\cN(0, I_D)$.

\paragraph{Backward Process.}
If reversing the time of the forward process, we can reconstruct the original distribution of the data from pure noise. With $(\overline{W}_t)_{t\geq 0}$ being another independent Wiener process, the backward SDE below \citep{anderson1982reverse} reverses the time in the forward SDE \eqref{eq:forward_sde}, 
\begin{equation}\label{eq:backward}
\diff \bXb_t = \left[\frac{1}{2} \bXb_t + { \underbrace{\nabla \log p_{T-t}(\bXb_t)}_{\text{score}}} \right] \diff t + \diff \overline{W}_t.
\end{equation}
Here $p_t(\cdot)$ denotes the marginal density of $X_t$ in the forward process. In the forward SDE \eqref{eq:backward}, the \textit{score function} $\nabla \log p_t(\cdot)$ plays a crucial role, but it has to be estimated from data.

\vspace{-8pt}
\paragraph{Score Matching.} To learn the unknown score function $\nabla \log p_t(\cdot)$, we train a score network $s_{\theta}(x,t)$ using samples from forward process. 
Let $\mathcal{D}$ denote the data for training. Then the score network is learned by minimizing the following loss:
\begin{equation}\label{equ:score match objective}
\begin{aligned}
        \hbox{min}_{s\in \mathcal{S}} \int_{0}^{T} {\EE}_{x_0 \in \mathcal{D}}\EE_{x_t|x_0} \sbra{\dabs{\nabla_{x_t} \log \phi_t(x_t|x_0)-s(x_t, t)}^2} \dif t,
\end{aligned}
\end{equation}
where $\cS$ is a given function class,
${\EE}_\mathcal{D}$ denotes the empirical expectation over training data $\mathcal{D}$ and $\EE_{x_t|x_0}$ denotes condition expectation over the forward process, $\phi_t(x_t | x_0)$ is the Gaussian transition kernel, i.e., $(2\pi h(t))^{-D/2}\exp(-{\lVert x_t - \alpha(t)x_0 \rVert^2}/\rbra{2h(t)} )$.

%\vspace{-8pt}
\paragraph{Generation and Guided Generation.}
Given a pre-trained score function $s_{\theta}$, one generates samples by the backward process \eqref{eq:backward} with the true score replaced by $s_{\theta}$.
Further, one can add additional guidance to steer its output distribution towards specific properties, as formulated in Module \ref{mod:backwards}.

\begin{module}[h]
    \begin{algorithmic}[1]
    \small
    \caption{$\texttt{Guided\_BackwardSample}(s_{\theta}, \texttt{G})$}
    \label{mod:backwards}
    \label{alg:backward}
        \STATE {\bf Input}: Score $s_\theta$, guidance $\texttt{G}$ default to be zero for unguided generation. 
        \STATE {\bf Hyper-parameter}: $T$.
        \STATE  
        Initialized at $\bXb_t\sim\mathcal{N}(0,I)$, simulate the following SDE till time $T$:
        \begin{equation*}
            \diff \bXb_t = \left[\frac{1}{2} \bXb_t + { s_{\theta}\rbra{\bXb_t, T-t } +  \texttt{G}\rbra{\bXb_t, T-t } } \right]  \diff t + \diff \overline{W}_t.
        \end{equation*}
    \STATE {\bf Output}: {Sample $X_{T}^{\leftarrow}$. } 
    \end{algorithmic}
\end{module}

A common goal of guided generation (Module \ref{mod:backwards}) is to generate $X$ with a desired property $Y=y$ from the distribution $P(X|Y = y)$. To this end, it essentially needs to learn the \textbf{conditional score function} $\nabla_{x_t} \log p_t(x_t \mid y)$. The Bayes rule gives
\begin{equation}
\label{equ:bayes_rule} 
    \nabla_{x_t} \log p_t(x_t \mid y) = \underbrace{\nabla  \log p_t(x_t)}_{\text{est. by } s_{\theta}(x_t, t)} + \underbrace{\nabla_{x_t} \log p_t(y\mid x_t)}_{\text{to be est. by guidance}}.
\end{equation}
When a pre-trained score network $s_{\theta}(x_t, t)\approx \nabla \log p_t(x_t)$, what remains is to estimate $\nabla_{x_t} \log p_t(y\mid x_t)$ and add it as the guidance term $\texttt{G}$ to the backward process in Module \ref{mod:backwards}.

%\vspace{-8pt}
\paragraph{Classifier and Classifier-Free Guidance.} 
Classifier guidance \citep{song2020score, dhariwal2021diffusion} samples from $P(X|Y = y)$ when $Y$ is a discrete label. This method estimates $\nabla_{x_t} \log p_t(y\mid x_t)$ by training auxiliary classifiers, denoted as $\hat{p}(y \mid x_t, t)$, and then computing the gradient of the classifier logits as the guidance, i.e., $\texttt{G}(x_t, t) = \nabla_{x_t} \log \hat{p}(y \mid x_t, t)$. 
Alternatively, classifier-free guidance \citep{ho2022classifier} jointly trains a conditional and an unconditional diffusion model, combining their score estimates to generate samples.

%\vspace{-8pt}
\paragraph{Notations.} For a random variable $X$, $P_{x}$ represents its distribution, and $p(x)$ denotes its density. For $X$, $Y$ jointly distributed, $P(X \mid Y = y)$ denotes the conditional distribution, and $p(x \mid y)$ its density. The conditional expectation is denoted as $\EE[x \mid y] $. Let $\mathcal{D}$ be the pre-training data, and let ${\EE}_\mathcal{D}$ be the empirical expectation over $\mathcal{D}$. 
The empirical mean and covariance matrix are  $\bar \mu := {\EE}_{x \in \mathcal{D}} [x]$ and $\bar \Sigma := {\EE}_{x \in \mathcal{D}} [(x-\bar \mu) (x-\bar \mu)^\top]$.
For a matrix $A$, $\operatorname{Span}(A)$ denotes the subspace spanned by its column vectors, and for a square matrix $A$, $A^{-1}$ denotes its inverse or Moore–Penrose inverse. For any differentiable function $f : \mathbb{R}^n \to \mathbb{R}^m$, $\nabla f \in \mathbb{R}^{m \times n}$ denotes Jacobian matrix, i.e., $(\nabla f)_{ij} =\frac{\partial f_i(x)}{\partial x_j}.  $

%\vspace{-8pt}
\section{A Primer on Gradient Guidance}\label{sec:grad_guidance}
%\vspace{-8pt}
Let us start with stating the problem we want to study: suppose \textbf{given} a pre-trained diffusion model where the score network $s_{\theta}(x_t, t)$ provides a good approximation to the true score function $\nabla \log p(x_t)$, the \textbf{goal} is to generate novel samples with desired properties that can be measured by a user-specified differentiable function $f$. We will refer to $f$ as a reward or objective function later on. To achieve this goal, from \cref{sec:preliminaries}, we know that guided generation with diffusion model is a good candidate, which deploys the following guided backward process (Module \ref{mod:backwards}):
\begin{align*}
\diff \bXb_t = \left[\frac{1}{2} \bXb_t + s_{\theta}(\bXb_t,T - t) {\color{red}+ {\texttt{G}}(X_t^{\leftarrow}, t)} \right] \diff t + \diff \overline{W}_t.
\end{align*}
Here the guidance term ${\texttt{G}}$ is what we focus on and wish to design. Specifically, we want to construct this guidance term ${\texttt{G}}$ based on the gradient $\nabla f$ of a general objective $f$. This is motivated by the gradient methodology in optimization, a natural, intuitive way for adding guidance is to steer the generated samples towards the steepest ascent direction of $f$ \citep{chung2022diffusion,bansal2023universal, clark2023directly}.

\subsection{Structural Data Distribution with Subspace}
When incorporating property optimization in the generation process, it's crucial to consider intrinsic low-dimensional structures of real-world data, such as local regularities, global symmetries, and repetitive patterns \citep{tenenbaum2000global, roweis2000nonlinear,pope2021intrinsic}. 
Blindly improving $f$ at the cost of losing these structures degrades sample quality dramatically. This quality degradation, also known as ``reward over-optimization'', is a common challenge for adapting diffusion models towards an external reward \citep{yuan2023reward,uehara2024fine}. 

To study the design of guidance that mitigates the risk of over-optimization, we focus on data that admits a low-dimensional latent subspace, formulated in the following assumption. 

\begin{assumption}[Subspace Data]\label{asmp:subspace_data}
Data $X \in \RR^D$ can be represented as $X = A U$, where $A \in \RR^{D \times d}$ is an unknown matrix with orthonormal columns, and the latent variable $U \in \RR^d$ follows some distribution $P_u$ with a density $p_u$. Here $d \ll D$. The empirical covariance of $U$ is assumed full rank.
\end{assumption}

In the rest of this section, we investigate the principles for designing a guidance based on the gradient of $f$ that ensures (i) improving the value of $f$, and at the same time, (ii) being adhere to the subspace structure, i.e. generated samples being close to the subspace spanned by $A$.

\subsection{Naive Gradient Does't Work as Guidance}
A tempting simple choice of the guidance ${\texttt{G}}$ is by taking the steepest ascent direction $\nabla f$, which we refer to as {\it naive gradient guidance} i.e.,
\begin{equation}
\label{equ:grad_guidanace_conceptual}
{\texttt{G}}(X_t^{\leftarrow}, t) \propto \nabla f(X_t^{\leftarrow}).
\end{equation}

However, the naive gradient guidance \eqref{equ:grad_guidanace_conceptual} would jeopardize the latent structure of data, which is demonstrated by the following proposition:

\begin{proposition}[Failure of Naive Guidance]\label{prop:failure_naive}
For naive guidance $\texttt{G}(X_t^{\leftarrow}, t) = b(t) \nabla f(X_t^{\leftarrow})$, suppose $b(t) >b_0>0$ for $t >t_0.$ For data in subspace under Assumption~\ref{asmp:subspace_data} and reward $f(x)=g^\top x$, $g {\perp} \operatorname{Span}(A)$ with $h(t)=1-\exp(-\sqrt{t})$, then the off-subspace component of the generated sample is consistently large:
$$
    \mathbb{E} [X_{T,\perp}^{\leftarrow}] = C g, \quad C > \exp{\left(-5/2\right)}b_0.
$$
\end{proposition}

The intuition provided by \cref{prop:failure_naive} is, while the pre-trained score network effectively steers the distribution toward the latent subspace \citep{chen2023score}, the gradient vector  $\nabla f$ may point outside the subspace, causing the generated output to deviate from it (Figure~\ref{fig:out_of_subspace}). This is why naive gradient guidance fails. \cite{uehara2024fine} also observed this, explaining that  $\nabla f$ is not computed for $t=T$ i.e., it is not aligned with the clean data space.

\begin{figure}[!htb]
\centering
\vspace{-15pt}
\subfigure{\includegraphics[width=0.35\textwidth]{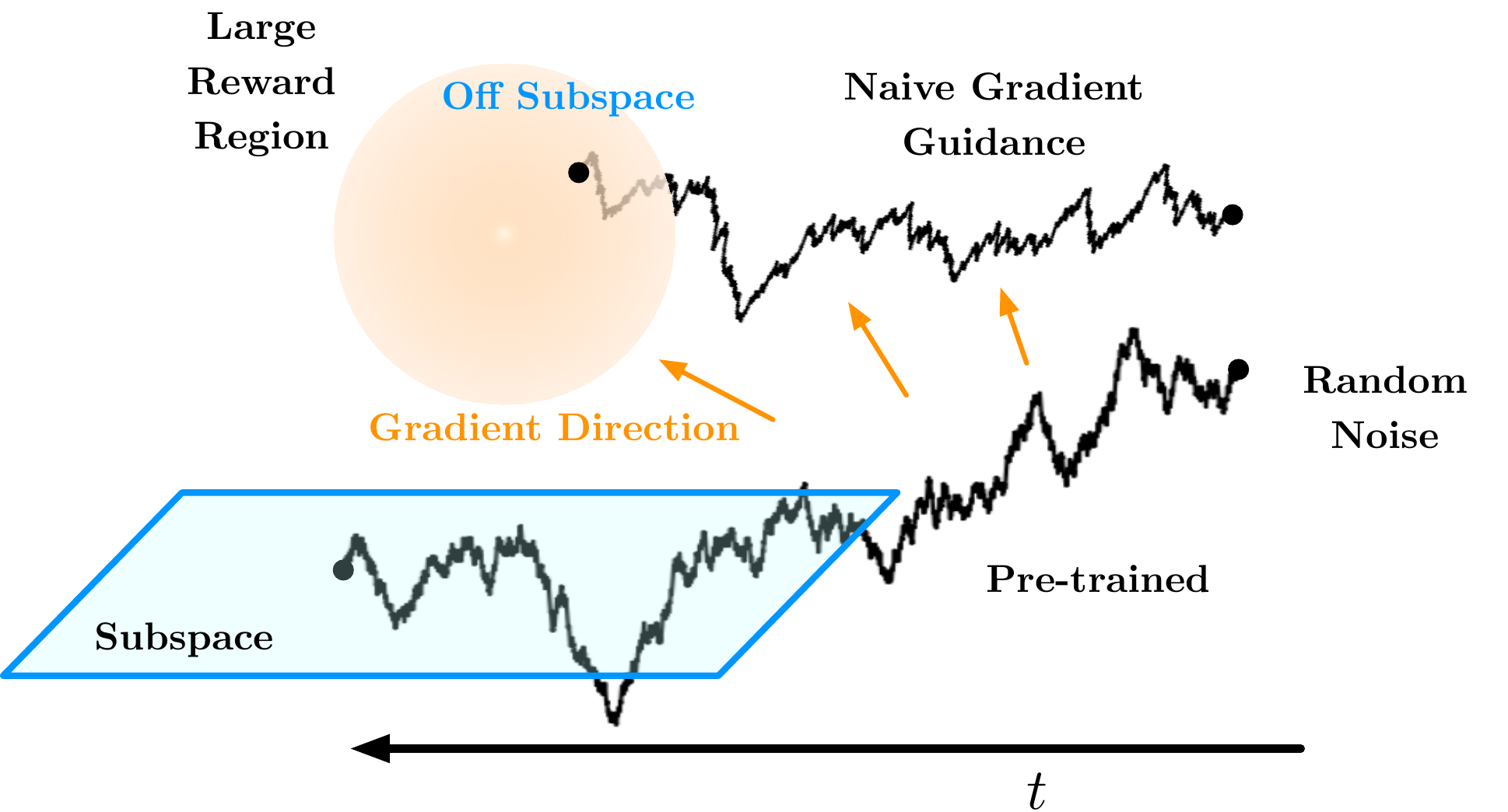}}\qquad
\subfigure{\includegraphics[width=0.33\textwidth]{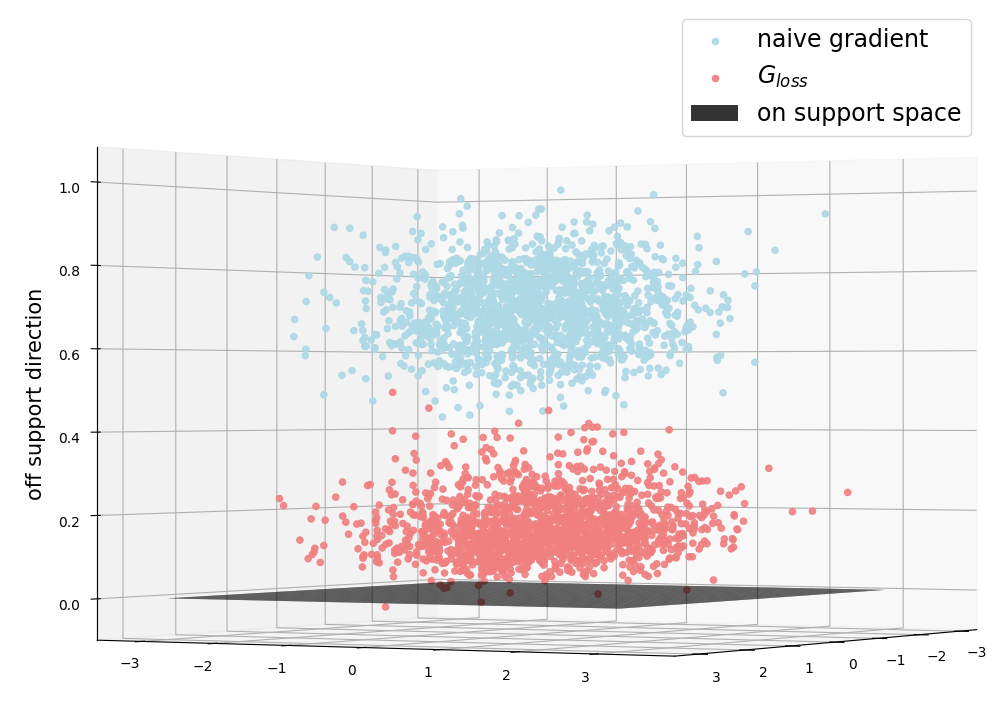} }
\vspace{-8pt}
\caption
{\small{
\textbf{Directly adding the gradient of the objective function to the backward process sabotages the subspace structure.} 
Left: Directly adding gradients that point out of the data subspace causes samples to leave the subspace.
Right: Numerical experiments show that naive gradients lead to substantially larger off-subspace error compared to our gradient guidance $\texttt{G}_{loss}$(Definition~\ref{def:gradient_guidance+}); see \cref{sec:experiment} for experiment details.}
}
%\vspace{-10pt}
\label{fig:out_of_subspace}
\end{figure}

\subsection{Motivating Latent Subspace Preserving Guidance from Conditional Score Function}
\label{sec:connection}
Failure of the naive gradient in maintaining data structure motivates us to seek alternatives. To get some inspiration, we start with the most elementary Gaussian probabilistic model and linear $f$. Later we will drop this assumption and consider general data distributions and general $f$.
\begin{assumption}[Gaussian Linear model]\label{asump: gaussain data linear reward}
Let data follow a Gaussian distribution, i.e., $X\sim \cN(\mu, \Sigma)$, and let $f(x) = g^\top x$ be a linear function for some $g \in \RR^D$. Let $Y =  f(X) + \epsilon$  with independent, identically distributed noise $\epsilon \sim \cN(0, \sigma^2)$ for some $\sigma > 0$. 
\end{assumption}
By the Bayes' rule, the conditional score $\nabla_{x_t} \log p_t(x_t \mid y)$ takes the form of a sum given by
\begin{equation}
\tag{recall \eqref{equ:bayes_rule}} 
    \nabla_{x_t} \log p_t(x_t \mid y) = \underbrace{\nabla  \log p_t(x_t)}_{\text{est. by } s_{\theta}(x_t, t)} + \underbrace{\nabla_{x_t} \log p_t(y\mid x_t)}_{\text{to be est. by guidance}}.
\end{equation}
Under the Gaussian assumption, we derive the following closed-form formula of the guidance term $\log p_t(y\mid x_t)$ that we want to estimate. The proof is provided in Appendix~\ref{prf:lmm:guidance=cond w.o.subspace}. 
%\hui{move lemma1 until sec3.4 to appendix}
\begin{lemma}[Conditional Score gives a Gradient-like Guidance]\label{lmm:guidance=cond w.o.subspace}
%\label{lmm:guassian_linear_score}
Under Assumption~\ref{asump: gaussain data linear reward}, we have
\begin{equation}
\label{equ:guidance_gaussian}
      \nabla_{x_t} \log p_t(y \mid x_t) = - \rbra{2 \sigma^2_y(x_t)}^{-1} \cdot \nabla_{x_t} \rbra{y - g^{\top} \EE[x_0 \mid x_t] }^2,
\end{equation}
 where $\EE[x_0 | x_t] $ denotes the conditional expectation of $x_0$ given $x_t$ in the forward process \eqref{eq:forward_sde},
 and  $\sigma_y^2(x_t)$ is the variance of the conditional distribution $Y \mid X_t =x_t$.
\end{lemma}
The form of conditional score shown in \cref{lmm:guidance=cond w.o.subspace} motivates our proposed gradient guidance:
\begin{definition}[Gradient Guidance of Look-Ahead Loss] \label{def:gradient_guidance+}
Given a gradient vector $g$, define the \textit{gradient guidance of look-ahead loss} as
\begin{equation}\label{equ:gradient_guidance+}
        \texttt{G}_{loss}(x_t, t) := - \beta(t) \cdot \nabla_{x_t}\rbra{y - g^\top \EE[x_0|x_t]}^2,
\end{equation}
where $\beta(t) > 0, y \in \RR$ are tuning parameters, and $\EE[x_0|x_t]$ is the conditional expectation of $x_0$ given $x_t$ in the forward process \eqref{eq:forward_sde}, i.e., $\diff X_t = -\frac{1}{2} q(t) X_t \diff t + \sqrt{q(t)} \diff W_t.$
\end{definition}
The formula in \eqref{equ:gradient_guidance+} generalizes the intuition of a conditional score for any data distribution and objective function. It scales with the residual term $y - g^{\top} \EE[x_0 \mid x_t]$, tuning the {\it strength of guidance}. Here,  $\EE[x_0 \mid x_t]$ represents the expected clean data $x_0$ given $x_t$ in the forward process, which coincides with the expected sample in the backward view. This residual measures the \textbf{look-ahead gap} between the expected reward of generated samples and the target value. The \textbf{look-ahead loss} $(y - g^\top \EE[x_0|x_t])^2$  resembles the proximal term commonly used in first-order proximal optimization methods. 
\vspace{-4pt}
\paragraph{Remark.} The gradient guidance \eqref{equ:gradient_guidance+} aligns with the groundtruth conditional score in \eqref{equ:guidance_gaussian} under the assumptions of Gaussian data and linear reward (\cref{asump: gaussain data linear reward}). This theoretical motivation, rooted in a fundamental framework, distinguishes our work from the empirical practice, such as DPS \citep{chung2022diffusion} and universal guidance \citep{bansal2023universal}.

A key advantage of $\texttt{G}_{loss}$ is that it enables preserving the subspace structure, for \textbf{any} data distribution under \cref{asmp:subspace_data}. This is formalized in the following theorem, the full proof in \cref{prof:faith_grad_guidance}.

\begin{theorem}[Faithfulness of $\texttt{G}_{loss}$ to the Low-Dimensional Subspace of Data]
\label{thm:faith_grad_guidance}
Under Assumption~\ref{asmp:subspace_data}, it holds for any data distribution and $g\in\mathbb{R}^D$ that
     \begin{equation}
     \label{equ:guidance_on_subspace}
         \texttt{G}_{loss}(x_t, t) \in \operatorname{Span}(A).
     \end{equation}  
\end{theorem}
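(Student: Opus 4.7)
The plan is to unfold $\texttt{G}_{loss}$ via the chain rule, reducing the claim to showing that $\nabla_{x_t}(g^\top \EE[x_0|x_t])$ lies in $\operatorname{Span}(A)$, and then to exploit a structural identity for the posterior mean $\EE[x_0|x_t]$ that is special to the subspace model, namely that it factors as $A\,\psi(A^\top x_t)$ for some map $\psi:\RR^d\to\RR^d$. Concretely, applying the chain rule to the square,
$$\texttt{G}_{loss}(x_t,t) \;=\; 2\beta(t)\bigl(y - g^\top\EE[x_0|x_t]\bigr)\cdot \nabla_{x_t}\bigl(g^\top\EE[x_0|x_t]\bigr),$$
the leading factor is a scalar and can be ignored for the purpose of membership in $\operatorname{Span}(A)$. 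Thus the theorem boils down to the claim $\nabla_{x_t}\bigl(g^\top\EE[x_0|x_t]\bigr)\in \operatorname{Span}(A)$.

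For the structural identity, I would invoke Tweedie's formula, which for the Ornstein--Uhlenbeck forward process \eqref{eq:forward_sde} gives $\EE[x_0|x_t]=\alpha(t)^{-1}\bigl(x_t + h(t)\nabla\log p_t(x_t)\bigr)$. Substituting the score decomposition of Proposition~\ref{prop:score dcp}, the orthogonal-component term $-h(t)^{-1}(I_D-AA^\top)x_t$ combines with $x_t$ to leave only $AA^\top x_t$, yielding
$$\EE[x_0|x_t] \;=\; \alpha(t)^{-1}\,A\bigl[A^\top x_t + h(t)\,\nabla\log p_t^{\sf LD}(A^\top x_t)\bigr] \;=:\; A\,\psi(A^\top x_t).$$
Hence $\EE[x_0|x_t]$ depends on $x_t$ only through the $d$-dimensional projection $A^\top x_t$, and lies entirely in the column span of $A$. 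With this identity in hand, writing $v=A^\top x_t$, a second chain-rule application gives
$$\nabla_{x_t}\bigl[(A^\top g)^\top \psi(A^\top x_t)\bigr] \;=\; A\cdot \bigl(\nabla_v \psi(v)\bigr)^\top (A^\top g),$$
which is manifestly a linear combination of the columns of $A$, delivering the desired conclusion.

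The main obstacle I anticipate is the structural identity $\EE[x_0|x_t]=A\psi(A^\top x_t)$. The slick route above uses Tweedie together with the pre-established score decomposition (Proposition~\ref{prop:score dcp}); an alternative, self-contained route is a direct Bayes computation on $p(u|x_t)$ using the Gaussian transition kernel $X_t\mid X_0\sim\cN(\alpha(t)AU, h(t)I_D)$, in which one verifies that all $x_t$-dependence in the posterior $p(u|x_t)\propto \exp\bigl(\alpha(t)u^\top A^\top x_t/h(t) - \alpha(t)^2\|Au\|^2/(2h(t))\bigr)\,p_u(u)$ collapses to the sufficient statistic $A^\top x_t$, and hence so does its mean $\EE[U|X_t=x_t]$; composing with $X_0=AU$ then yields the same factorization. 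Once this identity is established, the remaining manipulations are routine chain-rule calculations.
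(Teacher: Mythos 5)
Your proof is correct and follows essentially the same route as the paper's: expand $\texttt{G}_{loss}$ by the chain rule, combine Tweedie's formula with the score decomposition of Proposition~\ref{prop:score dcp} to obtain $\EE[x_0|x_t]=A\,\psi(A^\top x_t)$ (the paper's $m$ equals your $\psi$ up to the factor $h(t)/\alpha(t)$), and conclude that the Jacobian maps $g$ into $\operatorname{Span}(A)$. No gaps.
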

\vspace{-12pt}
\paragraph{Proof Sketch.}
We have
\begin{equation*}
    \nabla_{x_t}\rbra{y - g^\top \EE[x_0 \mid x_t]}^2  \propto \nabla_{x_t} \EE[x_0 \mid x_t]^{\top} g.
\end{equation*}
We will show that the Jacobian $\nabla_{x_t} \EE[x_0 | x_t]$ maps any vector $g \in \mathbb{R}^D$ to $\operatorname{Span}(A)$. To see this, we utilize the score decomposition result in \cref{sec:score_decomposition}
and plug it  into the equality ${\EE}[x_0|x_t] = \alpha^{-1}(t)\rbra{x_t + h(t)\nabla \log p_t(x_t)}$ (Tweedie's formula \citep{efron2011tweedie}),
we have
\begin{align}
\label{eqn:Ex0_xt}
{\EE}[x_0 \mid x_t] = {\alpha^{-1}(t)} \left(x_t + h(t) \left[Am(A^\top x_t) - h^{-1}(t) x_t\right]\right) = {h(t)}/{\alpha(t)} \cdot A m(A^\top x_t),
\end{align}
here $m(u) = \nabla \log p_t^{\sf LD}(u) + {h^{-1}(t)}u$, $p_t^{\sf LD}(u)$ latent density (\cref{sec:score_decomposition}). We see $\nabla_{x_t} \EE[x_0 | x_t]^{\top}$ maps any vector to $\operatorname{Span}(A)$ because $m(\cdot)$ takes $A^\top x_t$ as input in the expression of ${\EE}[x_0 | x_t]$.
$\hfill \blacksquare$

We highlight that the faithfulness of  $\texttt{G}_{loss}$ holds for {\it  arbitrary} data distribution supported on the latent subspace. 
It takes advantage of the score function's decomposition \eqref{equ:decomposition}, having the effect of automatically adapting $g$ onto the latent low-dimensional subspace of data.
\vspace{-4pt}
\paragraph{Remark.} We provide a rigorous guarantee for manifold preservation of gradient guidance, a property previously discussed by \citep{chung2022diffusion,chung2022improving,he2023manifold}. However, while \citep{chung2022diffusion,chung2022improving} claim manifold preservation, they do not present a formal mathematical proof.  \citep{he2023manifold} relies heavily on pre-trained autoencoders for manifold projections, which are often unavailable in practical scenarios.

\subsection{Estimation and Implementation of $\texttt{G}_{loss}$}\label{sec:Gloss_implementation}

In this section, we discuss the estimation and computation of $\texttt{G}_{loss}$ based on a pre-trained score function $s_\theta$ in practice. $\texttt{G}_{loss}$ involves the unknown quantity $\EE[x_0|x_t]$. One can construct estimate $\EE[x_0 | x_t]$  by considering the Tweedie's formula \citep{efron2011tweedie}:
$     \nabla \log p_t(x_t) = - h^{-1}(t)\EE\left[x_t - \alpha(t)x_0 \big| x_t\right], $
which gives rise to
\begin{equation}
\label{equ:hatx0_guidance}
        \hat{\EE}[x_0|x_t]  :=  \alpha^{-1}(t)\rbra{x_t + h(t){s}_{\theta}(x_t, t)},
\end{equation}
and we refer to it as the \textit{look-ahead estimator}. The estimator \eqref{equ:hatx0_guidance} is widely adopted in practice \citep{song2020denoising, bansal2023universal}. Here $\alpha(t)$ and $h(t)$ are the noise scheduling used in the forward process \eqref{eq:forward_sde}.

Thus, we have obtained an \textbf{implementable version of the gradient guidance $\texttt{{G}}_{loss}$}, given by
\begin{equation}\label{equ:gradient_guidance_est}
        \texttt{{G}}_{loss}(x_t, t) = - \beta(t) \cdot \nabla_{x_t}\sbra{y - g^\top \left( \alpha^{-1}(t)\rbra{x_t + h(t){s}_{\theta}(x_t, t)}\right)}^2,
\end{equation}
\begin{wrapfigure}{r}{0.32\textwidth}
%\hspace{-12pt}
\centering
\vspace{-5pt}
\includegraphics[width=0.38\textwidth]{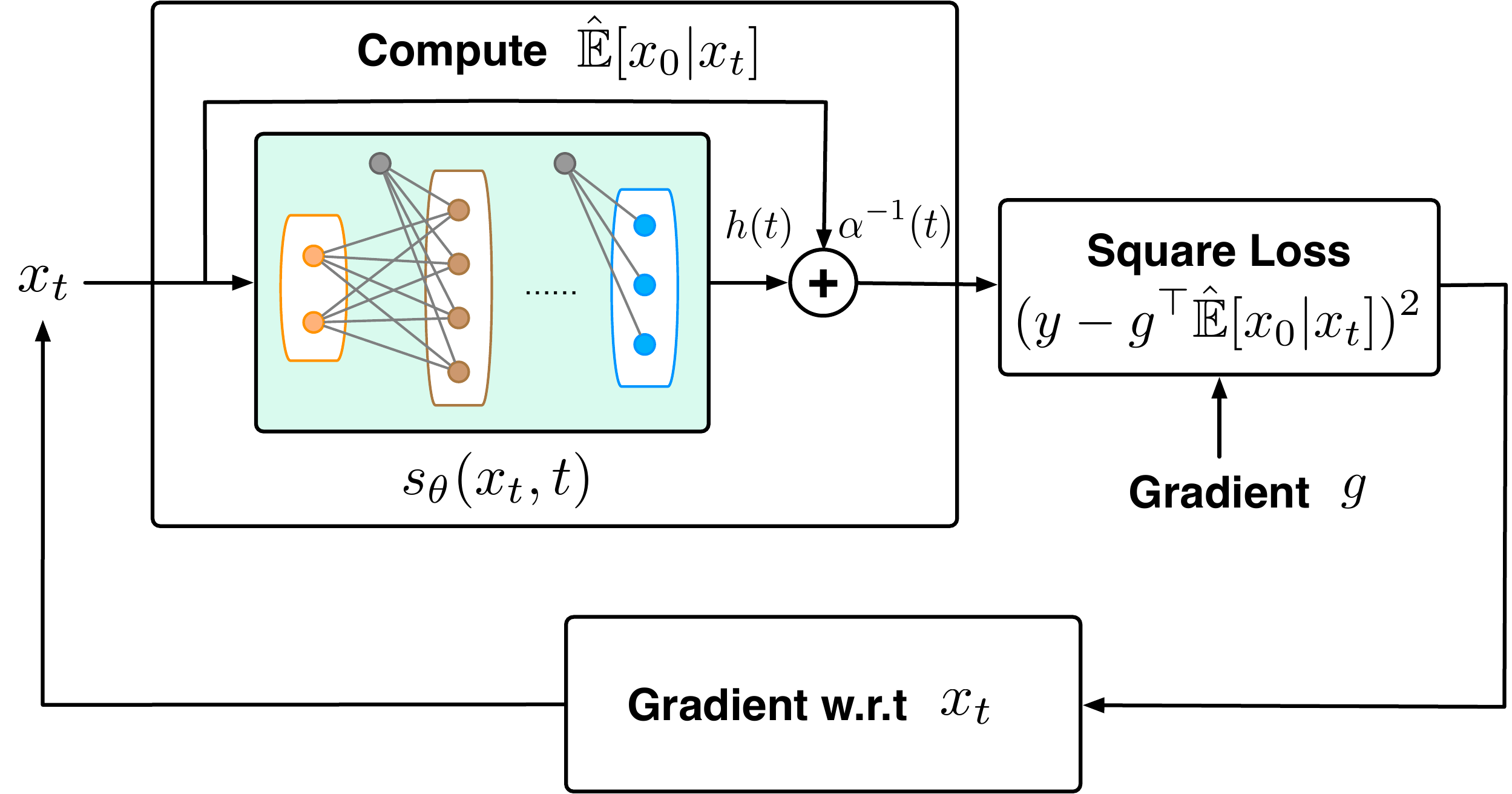}
\vspace{-18pt}
\caption{Computing $\texttt{G}_{loss}$.}
\vspace{-17pt}
\label{fig:auto_grad}
\end{wrapfigure}
With a slight abuse of notation, we use $\texttt{{G}}_{loss}$ to refer to this implementable formula \eqref{equ:gradient_guidance_est} in the remainder of this paper.
Here, $y$ is a target reward value from conditional score analysis under a Gaussian model and is treated as a tuning parameter in practice.
The gradient guidance \eqref{equ:gradient_guidance_est} is lightweight to implement. Given a pre-trained score function $s_{\theta}$ in the form of a neural network,  computing \eqref{equ:gradient_guidance_est} involves calculating the squared loss $\rbra{y - g^\top \hat{\EE}[x_0|x_t]}^2$ via a forward pass of $s_{\theta}$ and a backward pass using the auto-gradient feature of deep-leaning frameworks such as PyTorch and TensorFlow. 
See Figure~\ref{fig:auto_grad} for illustration.

\section{Gradient-Guided Diffusion Model as Regularized Optimizer}\label{sec:reg opt}

In this section, we study if gradient guidance steers pre-trained diffusion models to generate \textbf{near-optimal} samples. Our results show that: 1) Iterative gradient guidance improves the objective values; 2) The pre-trained diffusion model acts as a regularizer from an optimization perspective.

\subsection{Gradient-Guided Generation with A Pre-trained Score}

Assuming access to a pre-trained score network $s_\theta$ and the gradient of the objective function $f$, we present Algorithm~\ref{alg:main} to adapt the diffusion model and iteratively update the gradient guidance \eqref{equ:gradient_guidance_est}.
See Figure \ref{fig:framework} for illustration.

Alg.~\ref{alg:main} takes any pre-trained score function $s_{\theta}$ as input. Each iteration evaluates $\nabla f(\cdot)$ at samples from the previous iteration (Line 5(i)), computes the gradient guidance $\texttt{G}_{loss}$ with the new gradient (Line 5(ii)), and generates new samples using the updated guidance (Module~\ref{mod:backwards}). The algorithm outputs an adapted diffusion model specified by $(s_{\theta}, \texttt{G}_{K})$.
\vspace{-5pt}
\begin{algorithm}[h]
\begin{algorithmic}[1]
\small
\caption{Gradient-Guided Diffusion for Generative Optimization}
\label{alg:main}
\STATE {\bf Input}: Pre-trained score network $s_{\theta}(\cdot,\cdot)$, differentiable objective function $f$.
\STATE {\bf Tuning Parameter}: 
Strength parameters $\beta(t)$, $\{y_k\}_{k=0}^{K-1}$, number of iterations $K$, batch sizes $\{ B_k \}$.
\STATE {\bf Initialization}: $\texttt{G}_{0} = \texttt{NULL}$.
%$z_0 = \bar{x}_0$.\mw{Is $\bar{x}_0$ included in input? }
\FOR{$k = 0 ,\dots, K-1$}
\STATE {\bf Generate}: Sample 
$z_{k,i} \sim$ $ \texttt{Guided\_BackwardSample} (s_{\theta}, \texttt{G}_{k})$ using Module~\ref{mod:backwards}, for $i \in [B_{k}]$. 
\STATE{\bf Compute Guidance}: \\
        (i) Compute the sample mean $\Bar{z}_{k} := (1/B_k)\sum_{i=1}^{B_k} z_{k,i} $.\\ (ii) Query gradient $g_k= \nabla f(\Bar{z}_{k})$.\\
        (iii) Update gradient guidance $\texttt{G}_{k+1}(\cdot,\cdot) = \texttt{G}_{loss}(\cdot,\cdot) $ via \eqref{equ:gradient_guidance+}, using $s_{\theta}$, gradient vector $g_k$, and parameters $y_k$ and $\beta(t)$. 
\ENDFOR
\STATE {\bf Output}: { ($s_{\theta} , \texttt{G}_{K} $)}.
\end{algorithmic}
\end{algorithm}
\vspace{-5pt}

\subsection{Gradient-Guided Diffusion Converges to Regularized Optima in Latent Space}\label{sec:theorems}

We analyze the convergence of Alg.~\ref{alg:main} and show that in final iterations,  generated samples center around a regularized solution of the optimization objective $f$ within the subspace $\operatorname{Span}(A)$. Our theorems allow the pre-training data to have {\it arbitrary distribution}.

\begin{assumption}[Concave smooth objective]
\label{asmp:f}
The objective $f: \RR^D \to \RR$ is concave and $L$-smooth w.r.t. the (semi-)norm $\dabs{\cdot}_{\bar{\Sigma}^{-1}}$, i.e., $\dabs{\nabla f(x_1) - \nabla f(x_2)}_{\bar{\Sigma}} \leq L \dabs{x_1-x_2}_{\bar{\Sigma}^{-1}}$ for any $x_1,x_2$.
\end{assumption}

While Alg.~\ref{alg:main} works with any pre-trained score network, we study its optimization properties focusing on the class of linear score functions given by 
\begin{equation}\label{equ:fully linear class}
\cS = \cbra{s(x, t)=  C_t x + b_t : C_t \in \RR^{D\times D}, \, b_t \in \RR^D}.
\end{equation}  
\vspace{-18pt}
\paragraph{Remark on the linear parametrization of score network \eqref{equ:fully linear class}:} 
Analyzing the output distribution of guided diffusion is challenging because the additional guidance term destroys the dynamics of reverse SDE. A linear score is a natural and reasonable choice for characterizing the output distribution, as it was also adopted by \citep{marion2024implicit}.

With a linear score function \eqref{equ:fully linear class}, pre-training a diffusion model is equivalent to using a Gaussian model to estimate and sample from the estimated distribution. Thus, the guidance $\texttt{G}_{loss}$ is also linear in $x_t$,  and the final output follows a Gaussian distribution; see \eqref{equ:generated_gaussian} in \cref{appd:prof thms}. We focus on the mean, $\mu_K$, of the generated distribution from the backward sampling of $(s_{\theta} ,\texttt{G}_K)$ (as $T\to\infty$), and establish its optimization guarantee.

\begin{theorem}[Convergence to Regularized Maxima in Latent Subspace in Mean]\label{thm:subspace fully linear} 
Let Assumptions \ref{asmp:subspace_data} and \ref{asmp:f} hold. Suppose we use the score function class \eqref{equ:fully linear class} for pre-training and computing guidance. Then Alg.\ref{alg:main} gives an adapted diffusion model that generates new samples that belong to  $\operatorname{Span}(A) $.  
Further, for any $\lambda > L,$ there exists $\beta(t), \cbra{y_k}$ and batch size $B_k $, such that with high probability  $1 - \delta$, the mean of the output distribution $ \mu_K$ converges to be near $x^*_{A, \lambda}$, and it holds
\begin{equation*} 
    f\rbra{x^*_{A, \lambda}} - f(\mu_K)   = \lambda \rbra{ \fs{L}{\lambda}}^K \mathcal{O}\rbra{d\log \rbra{\fs{K}{\delta}}}, 
\end{equation*}
where $x^*_{A, \lambda}$ is  an optimal solution of the regularized objective:
\begin{equation}\label{equ:reg_obj w. subspace fully linear}
        x^*_{A, \lambda} = \argmax_{x\in\operatorname{Span}(A)}~ \left\{ f(x) - \fs{\lambda}{2}\dabs{x-\bar{\mu}}^2_{{\Bar{\Sigma}}^{-1}} \right\}.
\end{equation}
where $\bar \mu,\bar\Sigma$ are empirical mean and covariance of pre-training data $\mathcal{D}.$
\end{theorem}

\vspace{-12pt}
\paragraph{Remarks.} \textbf{(1)} The regularization term $\fs{\lambda}{2}\dabs{x-\bar{\mu}}^2_{\Bar{\Sigma}^{-1}}$  \eqref{equ:reg_obj w. covariance} centers the data's mean $\bar\mu$ and is stronger in directions where the original data has low variance. Thus, the pre-trained score acts as a "prior" in the guided generation, favoring samples near the pre-training data, even with guidance. \\
\textbf{(2)} The regularization term cannot be arbitrarily small, as our theorem requires $\lambda \geq L$.  Thus, only adding gradient guidance cannot achieve the global maxima. If the goal is global optima, the pre-trained score must be updated and refined with new data, as explored in \cref{sec:update pre-score}.
\\
\textbf{(3)} The convergence rate is linear in the latent dimension $d$, rather than data dimension $D$. Since $\texttt{G}_{loss}$ is faithful to the latent subspace (\cref{thm:subspace fully linear}), the generated samples and optimization iterates of Alg.~\ref{alg:main} remain within $\operatorname{Span}(A)$. This leverage of the latent structure results in faster convergence.

\section{Gradient-Guided Diffusion with Adaptive Fine-Tuning for Global Optimization}\label{sec:update pre-score}

In the previous section, we have seen that adding guidance to a pre-trained diffusion model can't improve the objective function unlimitedly due to the pre-trained score function acting as a regularizer. We consider adaptively fine-tuning pre-trained diffusion models to attain global optima. Empirically, fine-tuning diffusion models using self-generated samples has been explored by \cite{black2023training, clark2023directly}.

\subsection{Adaptive Fine-Tuning Algorithm with Gradient Guidance}

We propose an adaptive version of the gradient-guided diffusion, where both the guidance and the score are iteratively updated using self-generated samples. The full algorithm is given in \cref{alg:main_w_update}.

We introduce a weighting scheme to fine-tune the score network using a mixture of pre-training data and newly generated samples. 
In Round $k$, let $\mathcal{D}_1, \dots, \mathcal{D}_k$ be sample batches generated from the previous rounds. 
Let $\cbra{w_{k,i}}_{i=0}^{k}$ be a set of weights. Conceptually, at Round $k$, we update the model by minimizing the weighted score matching loss:
\begin{equation}\label{equ:score update objective}
\begin{aligned}
    \min_{s \in \mathcal{S}} \int_{0}^{T} \sum_{i=0}^{k}w_{k,i}{\EE}_{x_0 \in \cD_i} \EE_{x_t|x_0} \sbra{\dabs{\nabla_{x_t} \log \phi_t(x_t|x_0)-s(x_t, t)}_2^2} \dif t,
\end{aligned}
\end{equation}
where $\cD_0 := \cD$ is the pre-training data. 
For illustration, please see also Figure \ref{fig:framework}, and the practical implementation  of Alg. \ref{alg:main_w_update} is in \cref{sec:experiment details}.

\begin{algorithm}[h]
\begin{algorithmic}[1]
\small
\caption{Gradient-Guided Diffusion with \textcolor{blue}{\textbf{Adaptive Fine-tuning}}}
\label{alg:main_w_update}
\STATE {\bf Input}: Pre-trained score $s_{\theta}(\cdot,\cdot)$,  differentiable objective function $f$. %training data set $\cD$, 
%\# of rounds $K$.
\STATE {\bf  Tuning Parameter}: 
strength parameter $\beta(t)$, $\{y_k\}_{k=0}^{K-1}$,  \textcolor{blue}{weights $\{\{w_{k,i}\}_{i=0}^k\}_{k=0}^{K-1}$}, number of iterations $K$,  batch sizes $\{ B_k \}$.\\

\STATE {\bf Initialize}: $s_{\theta_0} = s_{\theta}$, $\texttt{G}_0 = \texttt{NULL} $.

\FOR{$k = 0 ,\cdots, K-1$}
    \STATE {\bf Generate}: Sample a batch $\cD_{k} = \cbra{z_{k,i}}_{i=1}^{B_k}$ from $\texttt{Guided\_BackwardSample} (s_{\theta_{k}}, \texttt{G}_{k})$ (Module~\ref{mod:backwards}).
    \STATE{\bf Compute Guidance:}\\
        (i) Compute sample mean $\bar{z}_k = (1/B_k)\sum_{i=1}^{B_k} z_{k,i} $, and query gradient $g_k=  \nabla f(\bar{z}_k)$. \\
        (ii) \textcolor{blue}{Update $s_{\theta_{k}}$ to $s_{\theta_{k+1}}$ by minimizing the re-weighted objective \eqref{equ:score update objective}.}\\
        %$\{z_{k-1}\}$ of weight $w_k$ . \\
        (iii) Compute $\texttt{G}_{k+1}(\cdot,\cdot)=\texttt{G}_{loss}(\cdot,\cdot)$ in \eqref{equ:gradient_guidance+}, using $s_{\theta_{k+1}}$ and $g_k$, with parameter $y_k, \beta(t)$.

\ENDFOR
\STATE {\bf Output:} {$(s_{\theta_K},\texttt{G}_{K}) $}.  
\end{algorithmic}
\end{algorithm}
%\vspace{-5pt}

\subsection{Guided Generation Finds Unregularized Global Optima}

Finally, we analyze the optimization properties for gradient-guided diffusion model with iterative finetuning. We establish that the process of  \cref{alg:main_w_update} yields a final output distribution whose mean, denoted by $\mu_K$, converges to the global optimum of $f$.

For simplicity of analysis, we study the following function class
\begin{equation}
\label{equ:fully linear class_freeze covariance}
\cS^{\prime} = \cbra{s(x, t)=  \hat{C_t} x + b_t : \, b_t \in \RR^D},
\end{equation}
where $\hat{C_t}$ is set to stay the same in pre-trained scores, with only $b_t$ updated during iterative fine-tuning.

\begin{theorem}[Convergence to Unregularized Maxima in Latent Subspace in Mean]\label{thm:w. update w. subspace}
Let Assumptions \ref{asmp:subspace_data} and \ref{asmp:f} hold, and assume there exists $M > 0$ such that $\dabs{x^*_{A, \lambda}} < M$ for all $\lambda \geq 0$. Suppose we use the score function class \eqref{equ:fully linear class} for pre-training $s_{\theta}$ and the class \eqref{equ:fully linear class_freeze covariance} for finetuning it. Then Algorithm \ref{alg:main_w_update} gives an adapted diffusion model that generates new samples belonging to  $\operatorname{Span}(A) $. Further, there exists $\{\beta(t)\}, \cbra{y_k}, \cbra{B_k}$ and $\cbra{w_{k,i}}$, such that  % when $T \to \infty$ in backward sampling, 
with  probability $1-\delta$, 
\begin{equation}
    f^*_{A} -  f(\mu_K) = \cO \rbra{\fs{dL^2\log K}{K} \cdot \log \rbra{\frac{K}{\delta}}},
\end{equation}
where $
f^*_{A} = \max\{ f(x) | x\in \operatorname{Span}(A)\}.
$
\end{theorem}
Theorem \ref{thm:w. update w. subspace} illustrates that fine-tuning a diffusion model with self-generated data can reach global optima while preserving the latent subspace structure. 
The convergence rate matches standard convex optimization in terms of gradient evaluations, $K$.
Compared to standard gradient solvers, guided diffusion models leverage pre-training data to solve optimization problems in a low-dimensional space, preserving desired structures and enabling more efficient exploration and faster convergence.

\section{Experiments}\label{sec:experiment}

\subsection{Simulation}
We conduct numerical simulations of Algorithms \ref{alg:main} and \ref{alg:main_w_update}, following the subspace setup described in Assumption \ref{asmp:subspace_data}. Specifically, we set \( d = 16 \), \( D = 64 \),  The latent variable \( u \) is drawn from \( \cN(0, I_d) \) and used to construct \( x = Au \), where \( A \) is a randomly generated orthonormal matrix. We define the objective function \( f(x) = 10 - (\theta^\top x - 3)^2 \). To approximate the score function, we employ a version of the U-Net \citep{ronneberger2015u} with 14.8M trainable parameters. More details including how to set up $\theta$ are provided in \cref{sec:simulation_detail}.

\begin{wrapfigure}{r}{0.46\textwidth}
\centering
\vspace{-15pt}
\subfigure{\includegraphics[width=0.22\textwidth]{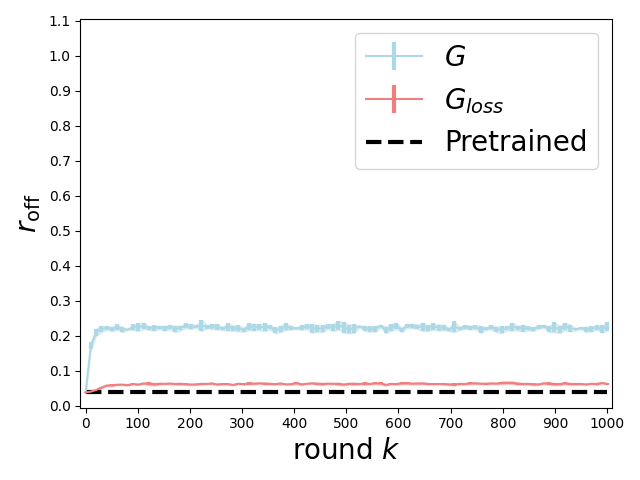}}
\subfigure{\includegraphics[width=0.22\textwidth]{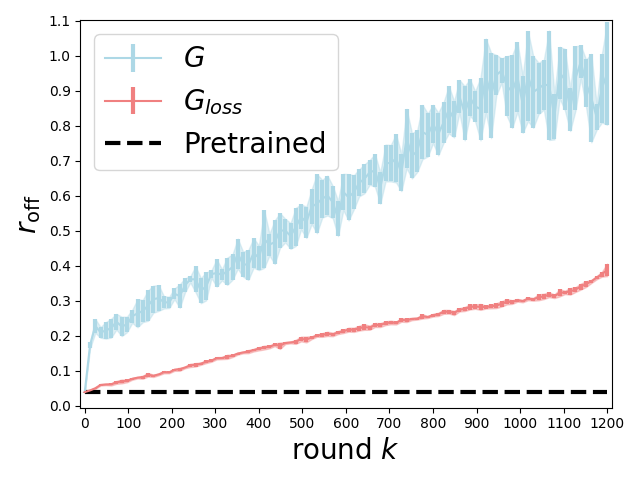}}
\vspace{-12pt}
\caption{\small{
\textbf{Comparison between two types of gradient guidance $\texttt{G}$ and $\texttt{G}_{loss}$ (left: Alg.~\ref{alg:main}; right: Alg.~\ref{alg:main_w_update}).} The off/on support ratio of the generated samples is defined as $r_\mathrm{off} =\frac{\Vert{x_\bot}\Vert}{\Vert{x_\Vert}\Vert}$.
}}
\vspace{-35pt}
\label{fig:compare_two_G}
\end{wrapfigure}
\vspace{-8pt}
\paragraph{Preserving Subspace Structure.} We first demonstrate that $\texttt{G}_{loss}$ preserves the subspace structure learned from the pre-trained model. For comparison, we also tested the naive guidance $\texttt{G}(x_t,t) :=  \beta(t) \left( y - g^\top \mathbb{E}[x_0 | x_t] \right) g$ (more details in \cref{sec:simulation_detail}.). \cref{fig:compare_two_G} (left) shows that $\texttt{G}_{loss}$ performs much better than the naive gradient $\texttt{G}$ in preserving the linear subspace. \cref{fig:compare_two_G} (right) demonstrates that off-support errors increase with adaptive score fine-tuning (Alg.~\ref{alg:main_w_update}) due to distribution shift, with $\texttt{G}$ resulting in more severe errors than $\texttt{G}_{loss}$.

\vspace{-5pt}
\paragraph{Convergence Results.} \cref{fig:convergence_simulation} (a) and (b) show that Alg.~\ref{alg:main} converges to a sub-optimal objective value, leaving a gap to the maximal value. This aligns with our theory that the pre-trained model acts as a regularizer in addition to the objective function. \cref{fig:convergence_simulation} (c) shows that Alg~\ref{alg:main_w_update} converges to the maximal value of the objective function. As illustrated by \cref{fig:convergence_simulation} (d), samples from Alg. \ref{alg:main} mostly stay close to the pre-training data distribution (dotted contour area), whereas samples from Alg. \ref{alg:main_w_update} move outside the contour as the diffusion model is fine-tuned with self-generated data.

\begin{figure}[!htb] 
\centering
%\vspace{-14pt}
\subfigure[ Alg.~\ref{alg:main}: $\theta=A\beta^*$]{\includegraphics[width=0.23\textwidth]{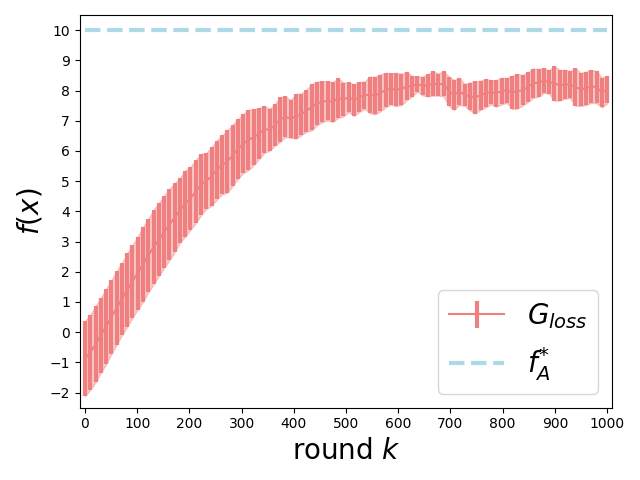}}
\subfigure[ Alg.~\ref{alg:main}: $\frac{\Vert{\theta_\bot}\Vert}{\Vert{\theta_\Vert}\Vert}=9$]{\includegraphics[width=0.23\textwidth]{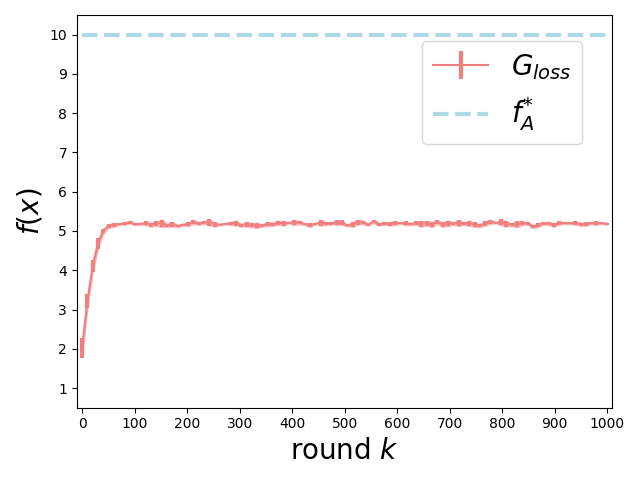}}
\subfigure[Convergence of Alg.~\ref{alg:main_w_update}]{\includegraphics[width=0.23\textwidth]{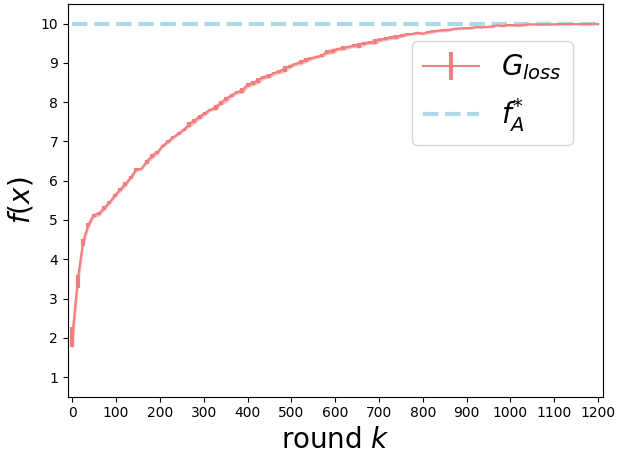}}
\subfigure[Visualizing sample generattion]{\includegraphics[width=0.26\textwidth]{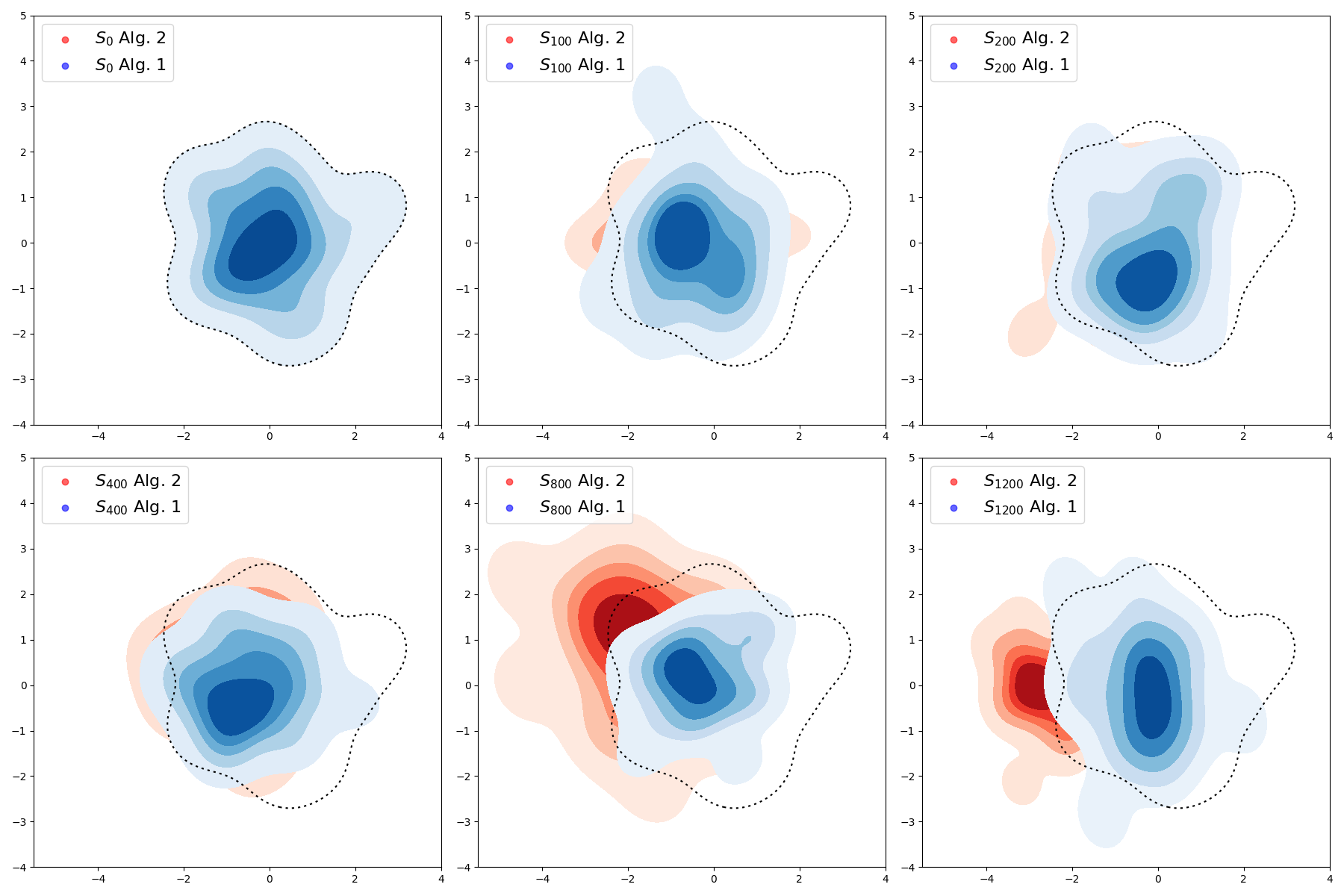}}
\vspace{-10pt}
\caption{\small{
\textbf{Convergence of Algorithms~\ref{alg:main} and \ref{alg:main_w_update}}. (a) and (b) are under different $\theta$ for the objective function. (d) visualizes the distribution of the generated samples of Alg.~\ref{alg:main_w_update} (red) and Alg.~\ref{alg:main} (blue) across the iterations.
}}
\vspace{-5pt}
\label{fig:convergence_simulation}
\end{figure}

\subsection{Image Generation}
We validate our theory in the image domain for \cref{alg:main}. We employ the StableDiffusion v1.5 model \citep{rombach2022high} as the pre-trained model. For the reward model, we follow the approach outlined by \cite{yuan2023reward} to construct a synthetic model. This model is based on a ResNet-18 \citep{he2016deep} architecture pre-trained on ImageNet \citep{deng2009imagenet}, with the final prediction layer replaced by a randomly initialized linear layer that produces scalar outputs.  For more experiment details, refer to \cref{sec:image_detail}. 
\begin{figure}[!htb]
\centering
\vspace{-12pt}
\subfigure{\includegraphics[width=0.6\textwidth, scale=0.3]{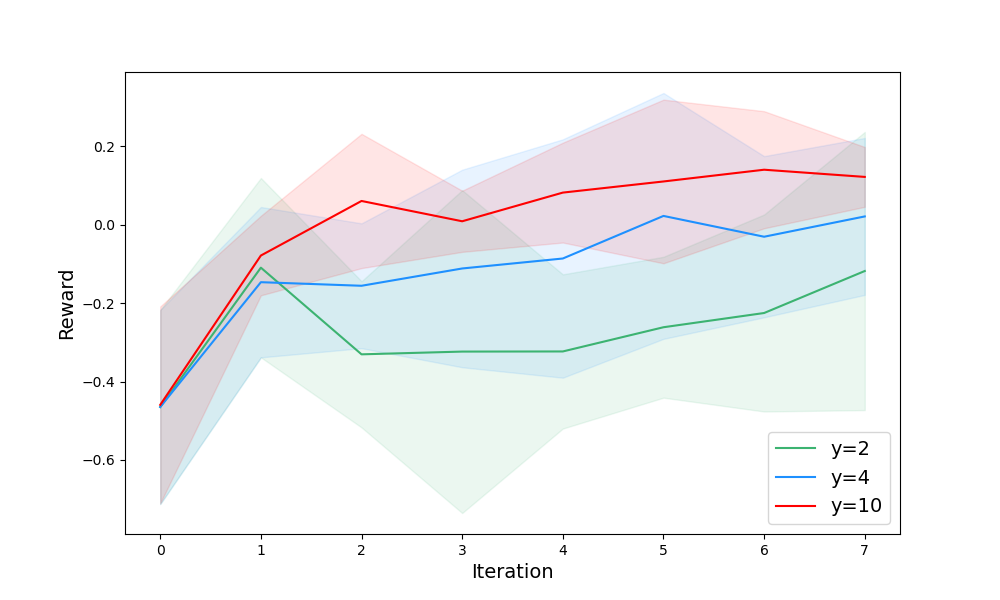}}
\subfigure{\includegraphics[width=0.36\textwidth, scale=0.3]{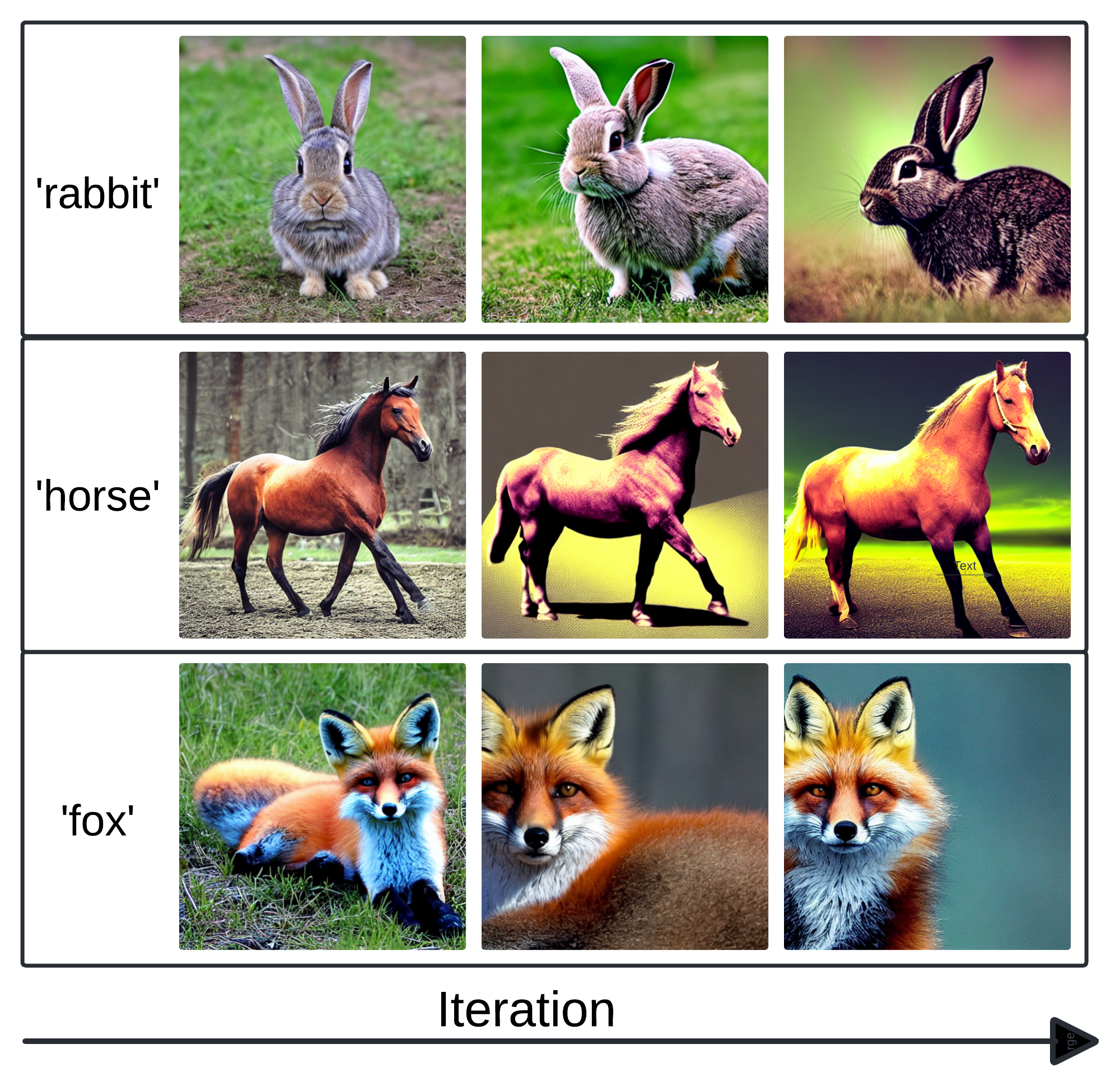}
}
\vspace{-10pt}
\caption
{\small{\textbf{Reward increase and effect on images across iterations.} Left: Reward increases and converges across iterations. Larger guidance strength $y$ (smaller regularizer strength) results in higher convergent reward value. Right: Images become more abstract, shifting from realistic to virtual backgrounds as reward increases.
}} 
\vspace{-12pt}
\label{fig:image_result}
\end{figure}

\vspace{-4pt}
\paragraph{Results.} By Algorithm \ref{alg:main}, the reward increases and converges. Figure \ref{fig:image_result} (left) shows the reward changes with optimization iterations. The hyperparameter $y$ tunes the strength of guidance and is inversely related to the strength of the regularizer (theoretical implications in \cref{appd:prof thms}). A larger guidance strength (smaller regularizer strength) leads to a higher convergent reward value. Figure \ref{fig:image_result} (right) illustrates the changes in generated images across iterations. As the reward increases, the images become increasingly abstract, shifting from photo-realistic with detailed backgrounds to more virtual, stylized ones.

\section{Conclusion}

%In this paper, we investigate the role and design of gradient guidance for adapting a pre-trained diffusion model from an optimization perspective. We propose a gradient guidance based on a look-ahead loss, as well as two variants of diffusion-based generative optimization algorithm utilizing such guidance. We provide optimization guarantees for adapting/fine-tuning diffusion models towards maximizing any target concave differentiable reward function. Our analysis has also been extended to linear subspace data, where our gradient guidance and adaptive algorithms provably preserve and leverage the latent subspace, thus they achieve faster convergence to near-optimal solutions. 

In this paper, we focus on gradient guidance for adapting or fine-tuning pre-trained diffusion models from an optimization perspective. We investigate the look-ahead loss based gradient guidance and two variants of diffusion-based generative optimization algorithms utilizing it. We provide guarantees for adapting/fine-tuning diffusion models to maximize any target concave differentiable reward function. Our analysis extends to linear subspace data, where our gradient guidance and adaptive algorithms preserve and leverage the latent subspace, achieving faster convergence to near-optimal solutions.

\newpage

\newpage
\appendix
\newpage
\section{Additional Related Works}
\label{sec:r_w}

\paragraph{{Direct Latent Optimization in Diffusion Models.}} Besides guidance methods, an alternative training-free route by optimizing the initial value of reverse process \cite{wallace2023end, ben2024d, karunratanakul2024optimizing, tang2024tuning, pan2023adjointdpm}. These methods typically backpropagate the gradient of reward directly to the initial latent vector through an ODE solver, utilizing the chain rule. Thus at inference time, the reverse process is unchanged except for being fed with an optimized initialization, different from the guidance method we studied.

\paragraph{Sampling and Statistical Theory of Diffusion Model} In contrast to the fruitful empirical advances, the theory of diffusion models is still limited. To the best of our knowledge, a theoretical understanding of fine-tuning diffusion models is absent. Existing results mainly focus on the sampling ability and statistical properties of unconditional diffusion models. In particular, for sampling ability, a line of works shows that the distribution generated by a diffusion model is close to the data distribution, as long as the score function is accurately estimated \citep{de2021diffusion, albergo2023stochastic, block2020generative, lee2022convergencea, chen2022sampling, lee2022convergenceb}. The accuracy of the estimated score function is measured in terms of an $L_\infty$ or $L_2$-norm distance.  More recently, \citep{chen2023restoration, chen2023probability, benton2023linear} develop refined and tighter analyses using Taylor expansions of the discretized backward process and localization method. It is worth mentioning that the analysis in \citep{chen2023restoration, chen2023probability, benton2023linear} extends to broader sample generation processes such as deterministic ones based on probabilistic ODEs. Going beyond distributions in Euclidean spaces, \citep{de2022convergence} analyzes diffusion models for sampling distribution supported on a low-dimensional manifold. Moreover, \citep{montanari2023posterior} consider sampling from symmetric spiked models, and \citep{el2023sampling} study sampling from Gibbs distributions using diffusion processes. 

Turning towards the statistical theory of diffusion models, \citep{song2020sliced} and \citep{liu2022let} provide asymptotic analyses, assuming a parametric form of the score function. Unfortunately, asymptotic analysis does not lead to concrete sample complexities. Later, concurrent works, \citep{oko2023diffusion} and \citep{chen2023score}, establish sample complexity bounds of diffusion models for estimating nonparametric data distributions. In high dimensions, their results highlight a curse of dimensionality issue without further assumptions, which also appears in \citep{wibisono2024optimal} considering kernel methods. More interestingly, these works demonstrate that diffusion models can circumvent the curse of dimensionality issue if the data has low-dimensional structures. In the same spirit, \citep{mei2023deep} investigate learning high-dimensional graphical models using diffusion models, without the curse of dimensionality. For conditional diffusion models, \citep{yuan2023reward, fu2024unveil} establish sample complexity bounds for learning generic conditional distributions. We refer readers to \citep{chen2024overview} for an overview of contemporary theoretical progress.

\section{Characterization for Output Distribution of Backward Process}\label{app:generated dis}
In this section, we provide analytical characterizations for the output distribution of the backward process guided by $\texttt{G}_{loss}$ when the pre-trained score is linear. We first give the result of score matching as follows.

\begin{lemma}[Pre-training with Linear Score Functions]\label{lmm:general score match}
Suppose for pre-training the score network, the class in \eqref{equ:score update objective} is
\begin{equation}
\tag{recall \eqref{equ:fully linear class}}
\cS = \cbra{s(x, t)=  C_t x + b_t : C_t \in \RR^{D\times D}, \, b_t \in \RR^D},
\end{equation}
If we freeze $C_t$ in \eqref{equ:fully linear class}, that is, minimizing the score matching objective \eqref{equ:score update objective} over the class $ \cbra{s(x, t)= C_t x + b_t : b_t \in \RR^D} $ gives 
\begin{equation*}
{s}_{\theta}(x_t,t)=  C_t\rbra{ x_t - \alpha(t)\bar{x}},
\end{equation*}
where $\Bar{x} = \rbra{\sum_{i=0}^k w_{k,i} {\EE}_{x\in \cD_i}[x]}\left /\rbra{\sum_{i=0}^k w_{k,i}} \right. $. Moreover, minimizing the score matching objective \eqref{equ:score update objective} over the class \eqref{equ:fully linear class} yields
\begin{equation*}
    {s}_{\theta}(x_t,t)= - \rbra{\alpha^2(t) \Bar{\Sigma} + h(t)I_D }^{-1}\rbra{ x_t - \alpha(t)\bar{x}},
\end{equation*}
where $\Bar{\Sigma} = \rbra{\sum_{i=0}^k w_{k,i} {\EE}_{x\in \cD_k} \sbra{(x-\Bar{x}) (x-\Bar{x})^\top}} \left / \rbra{\sum_{i=0}^k w_{k,i}} \right. $ are weighted data covariance.
\end{lemma}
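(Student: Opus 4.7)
The plan is to decouple the time integral and minimize the integrand separately for each $t\in[0,T]$, which is valid because $(C_t,b_t)$ is unconstrained across $t$. For a fixed $t$, write $x_t = \alpha(t) x_0 + \sqrt{h(t)}\,\epsilon$ with $\epsilon \sim \cN(0,I_D)$ independent of $x_0$, so that the conditional score target simplifies to $\nabla_{x_t}\log\phi_t(x_t|x_0) = -(x_t-\alpha(t)x_0)/h(t) =: Y_t$. The per-$t$ objective becomes the weighted least-squares problem
\begin{equation*}
\min_{C_t,b_t}~ \widetilde{\EE}\left\lVert Y_t - C_t x_t - b_t \right\rVert^2,
\end{equation*}
where $\widetilde{\EE}$ denotes expectation under the mixture $x_0 \sim \sum_i w_{k,i}\,\widehat{P}_{\cD_i}/\sum_i w_{k,i}$ followed by the forward kernel. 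This is a standard vector linear regression, solvable in closed form via its two first-order conditions.

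Next I would compute the four moments that determine the solution. By the forward dynamics and the independence of $\epsilon$ and $x_0$, I get $\widetilde{\EE}[x_t] = \alpha(t)\bar{x}$ and $\operatorname{Cov}(x_t) = \alpha^2(t)\bar{\Sigma} + h(t) I_D$. For the target, $\widetilde{\EE}[Y_t] = 0$ since $\widetilde{\EE}[x_t-\alpha(t)x_0\mid x_0] = 0$. The crucial cross-moment is
\begin{equation*}
\widetilde{\EE}[Y_t x_t^\top] = -\tfrac{1}{h(t)}\widetilde{\EE}\!\left[\sqrt{h(t)}\,\epsilon\,(\alpha(t)x_0 + \sqrt{h(t)}\,\epsilon)^\top\right] = -I_D,
\end{equation*}
using $\widetilde{\EE}[\epsilon x_0^\top]=0$ and $\widetilde{\EE}[\epsilon\epsilon^\top]=I_D$. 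This replaces any need for a Stein-type identity and makes the argument work for any distribution on $x_0$.

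Now I would solve the first-order conditions. The $b_t$-FOC gives $b_t = \widetilde{\EE}[Y_t] - C_t\widetilde{\EE}[x_t] = -C_t\alpha(t)\bar{x}$; this already establishes the frozen-$C_t$ claim in part one of the lemma, since it holds for any fixed $C_t$. Substituting this back into the $C_t$-FOC $\widetilde{\EE}[(Y_t-C_tx_t-b_t)x_t^\top]=0$, the mean-outer-product term cancels and leaves
\begin{equation*}
-I_D - C_t \operatorname{Cov}(x_t) = 0 \quad\Longrightarrow\quad C_t = -\bigl(\alpha^2(t)\bar{\Sigma} + h(t) I_D\bigr)^{-1},
\end{equation*}
which combined with $b_t=-C_t\alpha(t)\bar{x}$ yields the second claimed formula $s_\theta(x_t,t) = -(\alpha^2(t)\bar{\Sigma}+h(t)I_D)^{-1}(x_t - \alpha(t)\bar{x})$.

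The only non-routine checks are (i) verifying that the weighted mean and covariance produced by the mixture over $\{\cD_i\}$ are exactly the $\bar{x},\bar{\Sigma}$ stated in the lemma (a direct swap of the weighted sum with the expectation), and (ii) confirming that the covariance matrix $\alpha^2(t)\bar{\Sigma}+h(t)I_D$ is invertible; invertibility is immediate since $h(t)>0$ for $t>0$ and $\bar{\Sigma}\succeq 0$. I do not anticipate a serious obstacle: the proof is essentially solving a low-dimensional quadratic, and the main content lies in correctly computing $\widetilde{\EE}[Y_t x_t^\top]=-I_D$, which bypasses any dependence on the pre-training distribution's shape.
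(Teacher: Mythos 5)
Your proposal is correct and takes essentially the same route as the paper: both reduce the per-$t$ score-matching loss to a closed-form least-squares problem in $(C_t,b_t)$, solve for $b_t$ first (giving $b_t=-\alpha(t)C_t\bar{x}$ for any fixed $C_t$, which is exactly the frozen-$C_t$ claim), and then recover $C_t=-\rbra{\alpha^2(t)\bar{\Sigma}+h(t)I_D}^{-1}$. The only difference is bookkeeping: the paper conditions on $x_0$ to split the loss into a bias term plus a trace term before taking the matrix gradient in $C_t$, whereas you solve the normal equations directly using the cross-moment $\EE\sbra{Y_t x_t^\top}=-I_D$; the two computations are equivalent.
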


\begin{proof}
Using the linear score network class $\cS$ with freezing $C_t$, we cast the score matching loss \eqref{equ:score update objective} into
\begin{align*}
& \quad \int_{0}^{T} \sum_{i=0}^k w_{k,i} {\EE}_{x_0 \in \mathcal{D}_i} \EE_{x_t\sim\cN\rbra{\alpha(t)x_0,h(t)I_D}}  \sbra{\dabs{\nabla_{x_t} \log \phi_t(x_t|x_0)-s(x_t, t)}^2} \dif t \\
& = \int_{0}^{T} \sum_{i=0}^k w_{k,i} {\EE}_{x_0 \in \mathcal{D}_i} \EE_{x_t\sim\cN\rbra{\alpha(t)x_0,h(t)I_D}}  \sbra{\dabs{-\frac{1}{h(t)} (x_t - \alpha(t)x_0) - C_tx_t - b_t}^2} \dif t \\
& = \int_{0}^{T} \sum_{i=0}^k w_{k,i} {\EE}_{x_0 \in \mathcal{D}_i} \EE_{x_t\sim\cN\rbra{\alpha(t)x_0,h(t)I_D}}  \sbra{\dabs{\left(C_t + \frac{1}{h(t)}I_D \right)(x_t - \alpha(t)x_0) + (\alpha(t)C_tx_0 + b_t)}^2} \dif t \\
& \overset{(i)}{=} \int_{0}^{T} \sum_{i=0}^k w_{k,i} {\EE}_{x_0 \in \mathcal{D}_i}  \left[\dabs{b_t + \alpha(t)C_tx_0}^2\right] + w \int_{0}^{T} \text{trace}\rbra{ h(t) \left(C_t + \frac{1}{h(t)}I_D\right)^\top\left(C_t + \frac{1}{h(t)}I_D\right) }\dif t,
\end{align*}
where $w = \sum_{i=0}^k w_{k,i}$, equality $(i)$ follows from computing the expectation over the conditional Gaussian distribution of $x_t | x_0$. We note that $b_t$ should minimize $\sum_{i=0}^k w_{k,i} {\EE}_{x_0 \in \mathcal{D}_i}  \left[\dabs{b_t + \alpha(t)C_tx_0}^2\right]$ for any $t$, which leads to
\begin{align*}
\hat{b}_t = -\alpha(t) C_t \bar{x}.
\end{align*}
Now,  we solve $C_t$ for the second result.
Substituting $\hat{b}_t$ into the optimization objective \eqref{equ:score update objective} yields:
\begin{equation*}
    \begin{aligned}
        & \quad  \int_{0}^{T} \sum_{i=0}^k w_{k,i} {\EE}_{x_0 \in \mathcal{D}_i} \left[\dabs{\alpha(t) C_t \bar{x} - \alpha(t)C_tx_0}^2\right] + w \int_{0}^{T} \text{trace}\rbra{  h(t) \left(C_t + \frac{1}{h(t)}I_D\right)^\top\left(C_t + \frac{1}{h(t)}I_D\right)}\dif t \\
        &= \int_{0}^{T} \alpha^2(t)\sum_{i=0}^k w_{k,i} {\EE}_{x_0 \in \mathcal{D}_i}  \left[\dabs{ C_t (x_0 - \bar{x})}^2\right] + w \int_{0}^{T} \text{trace}\rbra{ h(t) \left(C_t^\top C_t + \fs{1}{h(t)}C_t +  \fs{1}{h(t)}C_t^\top + \fs{1}{h^2(t)}I_D\right)}\dif t. \\
    \end{aligned}
\end{equation*}
Taking the gradient for $C_t$, we get
\begin{equation*}
    2 \alpha^2(t) C_t \sum_{i=0}^k w_{k,i} {\EE}_{x_0 \in \mathcal{D}_i} \sbra{(x_0 - \bar{x}) (x_0 - \bar{x})^\top} + 2w h(t) C_t + 2 wI_D.
\end{equation*}
Setting the gradient above to 0, we get the solution for $ C_t$ as
\begin{equation*}
    \hat{C}_t = -\rbra{ \alpha^2(t) w^{-1} \sum_{i=0}^k w_{k,i} {\EE}_{x_0 \in \mathcal{D}_i}\sbra{(x_0 - \bar{x}) (x_0 - \bar{x})^\top} + h(t)I_D}^{-1}.
\end{equation*}
Therefore, the proof is completed.

\end{proof}

When $w_{k,0} = 1, w_{k,i} =  0, i  \in [k] $, Lemma~\ref{lmm:general score match} reduces to the pre-traning score matching.

\begin{corollary}
\label{coro:pretrain score matching}
Let $\cD$ be the pre-training data. Minimizing the score matching objective \eqref{equ:score match objective} over the function class \eqref{equ:fully linear class} gives 
\begin{equation}\label{equ:pre-train score solution linear}
{s}_{\theta}(x_t,t)= - \rbra{\alpha^2(t) \Bar{\Sigma} + h(t)I_D }^{-1}\rbra{ x_t - \alpha(t)\bar{\mu}}.
\end{equation}
\end{corollary}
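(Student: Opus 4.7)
\textbf{Proof Plan for Corollary \ref{coro:pretrain score matching}.} The plan is to obtain the statement as an immediate specialization of \cref{lmm:general score match}. The general lemma treats a weighted score-matching loss over a collection of data batches $\cD_0, \cD_1, \ldots, \cD_k$ with weights $\{w_{k,i}\}_{i=0}^{k}$, and shows that minimizing over the fully linear class \eqref{equ:fully linear class} yields a score function of affine form with slope determined by a weighted empirical covariance and intercept determined by a weighted empirical mean. The pre-training objective \eqref{equ:score match objective} is exactly this weighted loss in the degenerate case where all weight is placed on the pre-training batch $\cD_0 := \cD$.

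First, I would set $w_{k,0} = 1$ and $w_{k,i} = 0$ for $i \in [k]$ in \eqref{equ:score update objective}. Under this choice, the weighted loss in the statement of \cref{lmm:general score match} collapses exactly to the pre-training score matching loss \eqref{equ:score match objective}, where $\EE_{x_0 \in \cD}$ is the empirical expectation over the pre-training data.

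Next, I would specialize the weighted quantities appearing in the conclusion of \cref{lmm:general score match}. With the above weights, the normalizing constant is $w = \sum_{i=0}^{k} w_{k,i} = 1$, so the weighted mean reduces to $\bar x = {\EE}_{x \in \cD}[x] = \bar \mu$ and the weighted covariance reduces to $\bar \Sigma = {\EE}_{x\in \cD}\!\sbra{(x-\bar\mu)(x-\bar\mu)^\top}$, matching the empirical covariance notation introduced in the preliminaries. Substituting these into the closed-form minimizer
\begin{equation*}
{s}_{\theta}(x_t,t)= -\rbra{\alpha^2(t) \bar\Sigma + h(t)I_D}^{-1}\rbra{ x_t - \alpha(t)\bar x}
\end{equation*}
given by \cref{lmm:general score match} immediately produces the expression claimed in \eqref{equ:pre-train score solution linear}.

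There is essentially no obstacle here: the entire content of the corollary sits inside \cref{lmm:general score match}, and the argument is just a bookkeeping specialization of weights. The only thing to double-check is that the normalization convention ($\bar x$ defined as a ratio of weighted expectations) is compatible with the single-batch case, which is immediate since $w = 1$.
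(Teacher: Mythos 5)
Your proposal is correct and matches the paper's own treatment: the corollary is obtained from \cref{lmm:general score match} by setting $w_{k,0}=1$ and $w_{k,i}=0$ for $i\in[k]$, under which the weighted loss \eqref{equ:score update objective} reduces to the pre-training loss \eqref{equ:score match objective} and the weighted mean and covariance become $\bar\mu$ and $\bar\Sigma$. Nothing further is needed.
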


The following lemma characterizes the output distribution of the backward process guided by $\texttt{G}_{loss}$ when the pre-trained score is linear.

\begin{lemma} \label{lmm:posterior distribution general}
If the pre-trained score $s_\theta (x_t,t)$ is \eqref{equ:pre-train score solution linear}, substituting the score function with $s_\theta (x_t,t) + \texttt{G}_{loss}(x_t, t) $ in the backward SDE \eqref{eq:backward} yields, when $T \to \infty$,
\begin{equation*}
    X_{T}^{\leftarrow} \overset{d}{=}  \cN\left(\bar{\mu} +  \frac{y - g^\top \mu}{\sigma^2 + g^\top \Bar{\Sigma} g} \Bar{\Sigma}  g,  \Bar{\Sigma} -  \frac{\Bar{\Sigma} g   g^\top \Bar{\Sigma}}{\sigma^2 + g^\top \Bar{\Sigma} g}  \right).
\end{equation*}
with $\beta(t)$ assigned as  $\beta(t) = \fs{1}{2} \rbra{\sigma^2 + g^\top \Bar{\Sigma}^{-1}  \rbra{ I_D +  \alpha^2(t)\Bar{\Sigma}/h(t) }^{-1} g     }^{-1}$.  Moreover, if pre-training data reside in $\operatorname{Span}(A)$ following  Assumption~\ref{asmp:subspace_data}, it holds $X_{T}^{\leftarrow} \in \operatorname{Span}(A)$.
\end{lemma}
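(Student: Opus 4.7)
The plan is to show that, with the prescribed choice of $\beta(t)$, the combined drift $s_\theta(x_t,t) + \texttt{G}_{loss}(x_t,t)$ coincides with the exact time-$t$ conditional score $\nabla_{x_t}\log p_t(x_t\mid y)$ of the Gaussian probabilistic model of Assumption~\ref{asump: gaussain data linear reward} taken with $X \sim \cN(\bar\mu, \bar\Sigma)$ and $Y = g^\top X + \epsilon$. Once this identification is established, the guided backward SDE of Module~\ref{mod:backwards} is literally the exact conditional-score sampler, and its $T\to\infty$ output is the Gaussian posterior, which can be written down by standard Gaussian Bayes.

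First I would compute $\texttt{G}_{loss}$ in closed form. Substituting the pre-trained linear score \eqref{equ:pre-train score solution linear} into the look-ahead estimator $\hat\EE[x_0\mid x_t] = \alpha(t)^{-1}(x_t + h(t)\, s_\theta(x_t,t))$ and using the matrix identity
\[
I_D - h(t)\rbra{\alpha^2(t)\bar\Sigma + h(t)I_D}^{-1} = \alpha^2(t)\,\bar\Sigma\rbra{\alpha^2(t)\bar\Sigma + h(t)I_D}^{-1},
\]
I obtain $\nabla_{x_t}\hat\EE[x_0\mid x_t] = \alpha(t)\bar\Sigma(\alpha^2(t)\bar\Sigma + h(t)I_D)^{-1}$, whence
\[
\texttt{G}_{loss}(x_t,t) \;=\; 2\alpha(t)\beta(t)\rbra{y - g^\top\hat\EE[x_0\mid x_t]}\rbra{\alpha^2(t)\bar\Sigma + h(t)I_D}^{-1}\bar\Sigma\, g,
\]
which is affine in $x_t$.

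Next, invoking Lemma~\ref{lmm:guidance=cond w.o.subspace} applied to the Gaussian model with covariance $\bar\Sigma$ and mean $\bar\mu$ yields $\nabla_{x_t}\log p_t(y\mid x_t)$ in the same pre-conditioned form but with prefactor $\alpha(t)/(\sigma^2 + g^\top\bar\Sigma^{-1}(I_D + \alpha^2(t)\bar\Sigma/h(t))^{-1}g)$. Matching my $\texttt{G}_{loss}$ against this exact conditional score forces precisely the prescribed $\beta(t) = \tfrac{1}{2}(\sigma^2 + g^\top\bar\Sigma^{-1}(I_D + \alpha^2(t)\bar\Sigma/h(t))^{-1}g)^{-1}$. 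Moreover, the linear score \eqref{equ:pre-train score solution linear} is exactly the true marginal score $\nabla\log p_t$ of the forward-noised Gaussian $\cN(\alpha(t)\bar\mu,\, \alpha^2(t)\bar\Sigma + h(t)I_D)$. Combining these two facts with Bayes' rule \eqref{equ:bayes_rule} gives $s_\theta(x_t,t) + \texttt{G}_{loss}(x_t,t) = \nabla_{x_t}\log p_t(x_t\mid y)$ for the Gaussian model, so the guided backward SDE is the exact time-reversal of the forward OU process conditioned on $Y=y$. As $T\to\infty$, the initialization $\cN(0,I_D)$ becomes indistinguishable from the terminal forward marginal, so the backward process samples exactly from the posterior $X_0\mid Y=y$, and the standard Gaussian-conditioning formula delivers the mean $\bar\mu + (y - g^\top\bar\mu)/(\sigma^2 + g^\top\bar\Sigma g)\cdot\bar\Sigma g$ and covariance $\bar\Sigma - \bar\Sigma g g^\top\bar\Sigma/(\sigma^2 + g^\top\bar\Sigma g)$ of the statement.

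For the subspace claim, Assumption~\ref{asmp:subspace_data} gives $\bar\mu \in \operatorname{Span}(A)$ and $\operatorname{Im}(\bar\Sigma) \subseteq \operatorname{Span}(A)$; hence $\bar\Sigma g \in \operatorname{Span}(A)$ and the image of the conditional covariance also lies in $\operatorname{Span}(A)$, so the (degenerate) output Gaussian is supported on $\operatorname{Span}(A)$. The main obstacle is the prefactor bookkeeping: one must carefully track how the chain-rule factor $2\alpha(t)$ in $\nabla_{x_t}(y - g^\top\hat\EE[x_0\mid x_t])^2$ interacts with the matrix identity above so that the prescribed $\beta(t)$ exactly absorbs into the $\alpha(t)/(\sigma^2+\cdots)$ coefficient produced by Lemma~\ref{lmm:guidance=cond w.o.subspace}. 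Everything else is routine manipulation together with the classical fact that the backward SDE driven by the exact conditional score samples from the conditional distribution.
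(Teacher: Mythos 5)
Your proposal is correct and follows essentially the same route as the paper: interpret the empirical Gaussian $\cN(\bar\mu,\bar\Sigma)$ with $Y=g^\top X+\epsilon$ as an auxiliary model for which the linear pre-trained score is the exact marginal score, match $\texttt{G}_{loss}$ (with the prescribed $\beta(t)$) to the conditional-score term of Lemma~\ref{lmm:guidance=cond w.o.subspace} via Bayes' rule, and conclude that the guided backward SDE samples the Gaussian posterior, with the subspace claim following from $\bar\mu=AA^\top\bar\mu$ and $\bar\Sigma=A\bar\Sigma_u A^\top$. The only difference is cosmetic: you make the prefactor bookkeeping explicit, whereas the paper delegates it to the proof of Lemma~\ref{lmm:guidance=cond w.o.subspace}, and the paper justifies the $T\to\infty$ step slightly more formally via a Girsanov/KL argument showing $\mathrm{KL}(p_0(\cdot\mid y)\,\|\,p_T^{\leftarrow})\to 0$, where you simply assert the initialization mismatch vanishes.
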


\begin{proof}
Consider $X_0 \overset{d}{=} \cN \rbra{ \bar{\mu}, \Bar{\Sigma}}, Y=g^\top X_0 + \epsilon $ where $\epsilon \sim \cN \rbra{0, \sigma^2 }$. Let $X_0$ be the initialization of the forward process. Similar to the proof in \cref{prf:lmm:guidance=cond w.o.subspace},
we get 
\begin{align*}
\begin{bmatrix}
X_t \\
Y
\end{bmatrix}
\overset{d}{=} \cN\left(
\begin{bmatrix}
\alpha(t) \bar{\mu} \\
g^\top \bar{\mu}
\end{bmatrix},
\begin{bmatrix}
\alpha^2(t) \Bar{\Sigma} + h(t) I_D & \alpha(t) \Bar{\Sigma} g \\
\alpha(t)g^\top \Bar{\Sigma} & \sigma^2 + g^\top \Bar{\Sigma} g
\end{bmatrix}\right).
\end{align*}
Thus, we get $s_\theta (x_t,t)$ is exactly the score of marginal distribution of $X_t$, i.e., $\nabla \log p_t(x_t) =  s_\theta (x_t,t)$. According to the proof in \cref{prf:lmm:guidance=cond w.o.subspace}, we get $ s_\theta (x_t,t) + \texttt{G}_{loss}(x_t, t) = \nabla \log p_t(x_t\mid y) $. Thus, the backward SDE turns out to be
\begin{equation}\label{equ:backward catch condi score}
    \diff \bXb_t = \left[\frac{1}{2} \bXb_t + \nabla \log p_{T-t}(\bXb_t\mid y)\right] \diff t + \diff \overline{W}_t, \quad \bXb_0 \overset{d}{=} \cN \rbra{0, I_D}.
\end{equation}
The initial distribution $p_0(x_0 \mid y)$ of the forward process can also be obtained by \eqref{equ:backward catch condi score} where we replace the initial distribution as $p_T(x_T\mid y)$. According to the data processing inequality, we get the bound of the total variation distance between the terminal distribution $p_{T}^{\leftarrow} $ of \eqref{equ:backward catch condi score} and $p_0(x_0 \mid y)$:
\begin{equation*}
    \text{TV}\rbra{p_0 , p_{T}^{\leftarrow}  } \leq \text{TV}\rbra{p_T , \varphi  },
\end{equation*}
where $p_0, p_T$ are short hands for $p_0(x_0 \mid y) $ and 
$p_T(x_T\mid y)$, and $\varphi (\cdot)$ is the density for the standard normal distribution $\cN \rbra{0, I_D}$. Since in the forward process, $p_T \to \varphi$ when $T \to \infty $, we have $\text{TV}\rbra{p_0 , p_{T}^{\leftarrow}  } \to 0 $ when $T \to \infty$. We complete the first part of the lemma. As for the second part, if data reside in $\operatorname{Span}(A)$ following Assumption~\ref{asmp:subspace_data}, we have $\bar{\mu} = A\bar{u}  $ and $\Bar{\Sigma}  = A \bar{\Sigma}_{u} A^\top $, where $\bar{u} = {\EE}_{x \in \cD, u = A^{\top}x}[u] $, $ \Bar{\Sigma}_{u} = {\EE}_{x \in \cD, u = A^{\top}x}[\rbra{u-\bar{u}}\rbra{u-\bar{u}}^\top] $. Thus, the covariance matrix of $\bXb_T$ is
\begin{equation*}
    \Bar{\Sigma} -  \frac{\Bar{\Sigma} g   g^\top \Bar{\Sigma}}{\sigma^2 + g^\top \Bar{\Sigma} g} = A \sbra{\bar{\Sigma}_{u} -  \frac{\bar{\Sigma}_{u} A^\top g   g^\top A \bar{\Sigma}_{u}
}{\sigma^2 + g^\top \Bar{\Sigma} g} } A^\top,
\end{equation*}
and due to $\bXb_T$ follows Gaussian distribution, we get  $X_{T}^{\leftarrow} \in \operatorname{Span}(A)$.
Thus, the proof is completed.
\end{proof}

\section{Additional Materials for \cref{sec:grad_guidance}}
\DoToC

\subsection{Score decomposition for subspace data} \label{sec:score_decomposition}
Under \cref{asmp:subspace_data}, the score function $\nabla \log p_t(x)$ decomposes to two orthogonal parts: an on-support component belonging to the subspace; and an orthogonal component. We recall this key result in Proposition \ref{prop:score dcp}, which later plays a key role in deriving subspace preserving guidance.

\begin{proposition}
[Score Decomposition for Subspace Data (\cite{chen2023score} Lem.~1, Thm.~3)] 
\label{prop:score dcp}
Under \cref{asmp:subspace_data}, the score function $\nabla \log p_t(x)$ decomposes as
\begin{align}
\label{equ:decomposition}
\nabla \log p_t(x) = \underbrace{A\nabla \log p_t^{\sf LD}(A^\top x)}_{\pscore(A^\top x, t) \text{:~on-support~score}}  
-
\underbrace{ {h^{-1}(t)} \left(I_D - AA^\top \right) x}_{\oscore(x, t) \text{:~ortho.~score}}.
\end{align}
where
$
p_t^{\sf LD}(u^{\prime}) =  \int \phi_t(u^{\prime}|u)p_u(u) \diff u
$
with $\phi_t( \cdot | u)$ being the density of $\cN (\alpha(t)u, h(t)I_d)$ for the same  $\alpha(t)$ and $h(t)$ in the forward process \eqref{eq:forward_sde}.
\end{proposition}

\subsection{Proof of \cref{prop:failure_naive}}
We give the proof of \cref{prop:failure_naive}, which shows the failure of naive gradient guidance.
\begin{proof}
    Under \cref{asmp:subspace_data}, the score can be decomposed to terms parallel and orthogonal to $\operatorname{Span}(A)$ (\cref{prop:score dcp}). Applying naive guidance, we examine the orthogonal reverse process:
$$
\mathrm{d} X_{t, \perp}^{\leftarrow} =\left[\frac{1}{2}-\frac{1}{h(T-t)}\right] X_{t, \perp}^{\leftarrow}\mathrm{d} t + b(t)g \mathrm{d} t+\left(I_D-A A^\top\right) \mathrm{d} \overline{W}_t.
$$
Solving this SDE, we get the expectation of the final state following $\mathbb{E}[X_{T, \perp}^{\leftarrow}] = \int_0^T \exp \left(- \int_0^{t}h^{-1}(s)\mathrm{d}s \right) e^{t/2}b(T-t) g \mathrm{d}t$. For the schedule $h(t) = 1 - \exp(-\sqrt{t})$, we have the coefficient of direction $g$ is larger than $\int_0^T \exp(-t/2-2\sqrt{t})b(T-t)\mathrm{d}t > \int_{0}^1 \exp(-5/2)b_0 \mathrm{d}t >0$ where we can assume $T>1$. Thus, $\mathbb{E}[ X_{T, \perp}^{\leftarrow}] \neq 0$. This means the generated sample is leaving the subspace, i.e., naive gradient guidance will violate the latent structure.
\end{proof}

\subsection{Discussion for gradient-like guidance}
To further clarify the derivation of our gradient guidance, we present a variant of Lemma~\ref{lmm:guidance=cond w.o.subspace}.
\begin{proposition}\label{prop:g=scaler_grad}
Under Assumption~\ref{asump: gaussain data linear reward}, we have
\begin{equation}
\label{equ:guidance_gaussian_appd}
    \nabla_{x_t} \log p_t(y|x_t) = \beta(t) \sbra{y - g^{\top} \EE[x_0 | x_t]} \cdot \rbra{\alpha^2(t)\Sigma + h(t) I_D }^{-1} \Sigma g,
\end{equation}
    where $\EE[x_0 | x_t] $ denotes the conditional expectation of $x_0$ given $x_t$ in the forward process \eqref{eq:forward_sde},  and $\beta(t) =  \alpha(t)/ (\sigma^2 + g^\top \Sigma^{-1}  \rbra{ I_D + \alpha^2(t)/h(t) \cdot \Sigma }^{-1} g ) $.
\end{proposition}

\paragraph{Remarks.} Observe that, when $\Sigma=I$, \eqref{equ:guidance_gaussian_appd} suggests the following form of guidance that is aligned with the naive gradient, i.e., the steepest ascent direction:
$$ \texttt{G}(x_t, t) \propto \sbra{y - g^{\top} \EE[x_0 | x_t]} \cdot  g.$$

However, even for Gaussian distributions, as long as $\Sigma \neq I$, the term of \eqref{equ:guidance_gaussian} is no longer proportional to $g$ but becomes a pre-conditioned version of the gradient. We show the guidance above can maintain the subspace structure of data in the experiments \cref{sec:experiment details}.

\begin{wrapfigure}{r}{0.35\textwidth}
\centering
\vspace{-20pt}
\includegraphics[width=0.35\textwidth]
{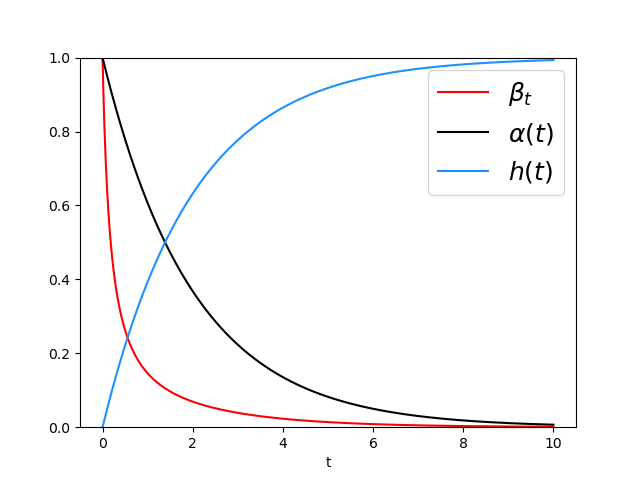}
\vspace{-20pt}
\caption{\small \textbf{Plot of $\beta(t), \alpha(t), h(t)$ for $t \in [0,10]$} when $\Sigma = I$.}
\label{fig:beta}
\end{wrapfigure}

Another observation is that this guidance scales with a residual term 
$y - g^{\top} \EE[x_0 \mid x_t]$. In particular, the residual term $y - g^{\top} \EE[x_0 \mid x_t]$ tunes the {\it strength of guidance}. Recall $\EE[x_0 \mid x_t]$ denotes the posterior expectation of clean data $x_0$ given $x_t$ in the forward process. Thus, in a backward view, $\EE[x_0 \mid x_t]$ coincides with the expected sample to be generated conditioned on $x_t$. In this sense, the quantity $y - g^{\top} \EE[x_0 \mid x_t]$ measures a \textbf{look-ahead gap} {between the expected reward of generated samples and the target value}. A larger absolute value of the residual means stronger guidance in the backward generation process. 

Under $q(t) \equiv 1$ in the forward process \eqref{eq:forward_sde},  we plot the theoretical choice of $\beta(t)$ and $\alpha(t), h(t)$ to $t$ in \cref{fig:beta}. In practice, the choice of $\alpha(t), h(t)$ can vary and they are determined by the forward process used for pre-training; and $\beta(t)$ can be treated as a tuning parameter to adjust the strength of guidance. 

The proof of \cref{prop:g=scaler_grad} is included in the proof of Lemma~\ref{lmm:guidance=cond w.o.subspace}.

\subsection{Proof of Lemma~\ref{lmm:guidance=cond w.o.subspace}}\label{prf:lmm:guidance=cond w.o.subspace}

\begin{proof}
Recall $\cbra{X_t}_{t \ge 0} $ is the stochastic process from the forward process. $X_0 \overset{d}{=} \cN \rbra{\mu, \Sigma }, Y = g^\top X_0 + \epsilon $ where $\epsilon \sim \cN (0, \sigma^2)$ is independent with $X_0$.
Since $X_0, X_t$ and $Y$ are joint Gaussian, we have $Y \mid X_t$ also follows Gaussian distribution, denoted as $\cN \rbra{m_y(x_t), \sigma_y^2(x_t)}$.Then, the closed form of $\nabla_{x_t} \log p_t(y \mid x_t)  $ can be derived  as
\begin{equation*}
    \nabla_{x_t} \log p_t(y \mid x_t) = - \nabla_{x_t} \sbra{\fs{1}{2}\rbra{\frac{y - m_y(x_t) }{\sigma_y(x_t)}}^2 } - \nabla_{x_t} \log \sigma_y(x_t).
\end{equation*}
Due to the linearity of $Y$ with regard to $X_0$, $m_y(x_t)$ can be computed as
\begin{equation}\label{equ:mean of y on xt}
    \begin{aligned}
        m_y(x_t) = \EE [y \mid x_t] = \EE [g^\top x_0 + \epsilon \mid x_t] = \EE [g^\top x_0  \mid x_t] = g^\top \EE [ x_0  \mid x_t]. 
    \end{aligned}
\end{equation}
To get the variance $\sigma_y^2(x_t)$, we compute the joint distribution $(X_t, Y)$. In the forward process, given $X_0 = x_0$, $X_t$ can be written as $\alpha(t) x_0 + Z_t$ for $Z_t \overset{d}{=} \cN(0, h(t)I_D)$ independent of $x_0$. Due to the linear function assumption, we have
\begin{align*}
\begin{bmatrix}
X_t \\
Y
\end{bmatrix}
=
\begin{bmatrix}
\alpha(t)I_D & 0 & I_D \\
g^\top & 1 & 0
\end{bmatrix}
\begin{bmatrix}
x_0 \\
\epsilon \\
Z_t
\end{bmatrix}.
\end{align*}
Observing that the joint distribution of $(x_0, \epsilon, z_t)$ is Gaussian, we deduce
\begin{align*}
\begin{bmatrix}
X_t \\
Y
\end{bmatrix}
\overset{d}{=} \cN\left(
\begin{bmatrix}
\alpha(t) \mu \\
g^\top \mu
\end{bmatrix},
\begin{bmatrix}
\alpha^2(t) \Sigma + h(t) I_D & \alpha(t) \Sigma g \\
\alpha(t)g^\top \Sigma & \sigma^2 + g^\top \Sigma g
\end{bmatrix}\right).
\end{align*}
Thus, we get $\sigma_y^2(x_t) = \sigma^2 + g^\top \Sigma g - \alpha^2(t) g^\top \Sigma \rbra{\alpha^2(t) \Sigma + h(t) I_D }^{-1} \Sigma g  $. Together with the derivation of the mean $m_y(x_t)$ \eqref{equ:mean of y on xt}, we get
\begin{equation*}
\begin{aligned}
    \nabla_{x_t} \log p_t(y \mid x_t) &= - \fs{1}{2 \sigma_y^2(x_t)  } \nabla_{x_t} \sbra{\rbra{y - g^\top \EE [ x_0  \mid x_t]}^2 }  \\
    &= \fs{1}{ \sigma_y^2(x_t)  } \rbra{y - g^\top \EE [ x_0  \mid x_t] } \nabla_{x_t} \EE [ x_0  \mid x_t] g.
\end{aligned}
\end{equation*}
To get $\EE [ x_0  \mid x_t]$, we derive the joint distribution of $(X_0, X_t)$:
\begin{align*}
\begin{bmatrix}
X_0 \\
X_t
\end{bmatrix}
\overset{d}{=} \cN\left(
\begin{bmatrix}
\mu \\
\alpha(t)\mu
\end{bmatrix},
\begin{bmatrix}
\Sigma & \alpha(t)\Sigma \\
\alpha(t)\Sigma & \alpha^2(t)\Sigma + h(t) I_D
\end{bmatrix}
\right).
\end{align*}
Thus, we get $\EE[x_0 | x_t] =  \mu + \alpha(t)  \rbra{\alpha^2(t)\Sigma + h(t) I_D }^{-1} \Sigma \rbra{x_t - \alpha(t)\mu} $. As a consequence, we have
\begin{equation*}
    \nabla_{x_t} \log p_t(y \mid x_t) = \fs{1}{ \sigma_y^2(x_t)  } \rbra{y - g^\top \EE [ x_0  \mid x_t] } \alpha(t)  \rbra{\alpha^2(t)\Sigma + h(t) I_D }^{-1} \Sigma g.
\end{equation*}
Together with the following equality by Woodbury identity, we get the result.
\begin{equation*}
\begin{aligned}
      \sigma_y^{-2}(x_t) &= \rbra{\sigma^2 + g^\top \Sigma g - \alpha^2(t) g^\top \Sigma \rbra{\alpha^2(t) \Sigma + h(t) I_D }^{-1} \Sigma g}^{-1} \\
    &= \sbra{\sigma^2 + g^\top \Sigma^{-1}  \rbra{ I_D + \fs{\alpha^2(t)}{h(t)} \Sigma }^{-1} g }^{-1}.
\end{aligned}
\end{equation*}
\end{proof}

\subsection{Proof of Theorem \ref{thm:faith_grad_guidance}}\label{prof:faith_grad_guidance}

\begin{proof}
We expand the derivative in $\texttt{G}_{loss}$ as
\begin{align*}
\texttt{G}_{loss}(x_t, t) = 2 \beta(t) (y - g^\top \EE[x_0 | x_t]) \left(\nabla_{x_t} {\EE}[x_0 | x_t]\right)^\top g.
\end{align*}
%Substituting $s_\theta(x_t, t) = \nabla \log p_t(x_t)$ into $\hat{\EE}[x_0 | x_t]$ yields 
It holds ${\EE}[x_0 | x_t] = \frac{1}{\alpha(t)}(x_t + h(t) \nabla \log p_t(x_t))$. 
Via the score decomposition under linear subspace data in \citet[Lemma 1]{chen2023score}, we have
\begin{align*}
\nabla \log p_t(x_t) = A \nabla \log p_t^{\sf LD}(A^\top x_t) - \frac{1}{h(t)} \left(I_D - AA^\top \right) x_t,
\end{align*}
where $p_t^{\sf LD}$ denotes the diffused latent distribution, i.e., $p_t^{\sf LD}(u') = \int \phi_t(u'|u_0) p_u(u_0) \dif u_0$. Recall that $\phi_t$ is the Gaussian transition kernel of the forward process and $p_u$ is the density of latent variable $u_0$ in Assumption~\ref{asmp:subspace_data}.

To ease the derivation, we denote $m(u) = \nabla \log p_t^{\sf LD}(u) + \frac{1}{h(t)}u$. It then holds that
\begin{align*}
{\EE}[x_0 | x_t] & = \frac{1}{\alpha(t)} \left(x_t + h(t) \left[Am(A^\top x_t) - \frac{1}{h(t)}x_t\right]\right) \\
& = \frac{h(t)}{\alpha(t)}A m(A^\top x_t).
\end{align*}
As a consequence, we can verify that
\begin{align*}
\nabla_{x_t} {\EE}[x_0 | x_t] = \frac{h(t)}{\alpha(t)} A \left[\nabla m(A^\top x_t)\right] A^\top,
\end{align*}
where $\nabla m(A^\top x_t) \in \RR^{d \times d}$ is the Jacobian matrix of $m$ at $A^\top x_t$. Plugging the last display into $\texttt{G}_{loss}$, we conclude that 
\begin{align*}
\texttt{G}_{loss}(x_t, t) & = 2 \beta(t) (y - g^\top {\EE}[x_0 | x_t]) \left(\nabla_{x_t} {\EE}[x_0 | x_t]\right)^{\top} g \\
& = \ell_t \cdot g
\end{align*}
for $\ell_t = 2 \beta(t) (y - g^\top {\EE}[x_0 | x_t])$ and $g' = \left(\nabla_{x_t} {\EE}[x_0 | x_t]\right)^{\top} g = \frac{h(t)}{\alpha(t)} A \left[\nabla m(A^\top x_t)\right]^\top A^\top g \in {\rm Span}(A)$. The proof is complete.
\end{proof}

\section{Additional Materials for in \cref{sec:reg opt,sec:update pre-score}}\label{appd:prof thms}
\DoToC

\subsection{Convergence to Regularized Maxima in Mean}\label{pf:thm full space}
Here we present a more fundamental version of Theorem~\ref{thm:subspace fully linear}.

\begin{theorem}[Convergence to Regularized Maxima {in Mean}] \label{thm:fully linear}
Let Assumption~\ref{asmp:f} hold, and let the pre-training data $\mathcal{D}$ have arbitrary distribution with covariance matrix $\Bar{\Sigma} \succ {\rm 0} $. Suppose the score function $s_\theta$ is pre-trained via minimizing the score matching loss \eqref{equ:score match objective} over the linear function class \eqref{equ:fully linear class}. Let  Alg.~\ref{alg:main} take ${s}_{\theta}(\cdot,\cdot)$ and $f$ as the input. For any $\lambda > L,$ there exists $\{\beta(t)\}$, $ \cbra{y_k}$, $\{B_k\}$ such that, with probability $\geq 1 - \delta$ , the mean of the output distribution $ \mu_K$ converges to be near $x^*_{\lambda}$, and 
\begin{equation}\label{equ:conv rate fully linear}
    f\rbra{x^*_{\lambda}} - f(\mu_K)  =  \lambda \rbra{ \fs{L}{\lambda}}^K \mathcal{O}\rbra{D \log \rbra{\fs{K}{\delta}}}, 
\end{equation}
where $D$ is the ambient dimension of data, and $x^*_{\lambda}$ is a regularized maximizer of $f$ given by
\begin{equation}
    \label{equ:reg_obj w. covariance}
    x^*_{\lambda} = \argmax_{x\in \RR^D}~\left\{ f(x)  - \fs{\lambda}{2}\dabs{x-\bar{\mu}}^2_{\Bar{\Sigma}^{-1}}\right\},
\end{equation}
where $\bar \mu,\bar\Sigma$ are empirical mean and covariance of pre-training data $\mathcal{D}.$
\end{theorem}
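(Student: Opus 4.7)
The plan exploits that minimizing the score-matching loss \eqref{equ:score match objective} over the linear class \eqref{equ:fully linear class} returns the best linear approximation of the true time-$t$ score. By an integration-by-parts (Stein-type) identity, this minimizer is
\begin{equation*}
s_\theta(x,t) \;=\; -\bigl(\alpha^2(t)\bar\Sigma + h(t) I_D\bigr)^{-1}\bigl(x - \alpha(t)\bar\mu\bigr),
\end{equation*}
regardless of whether $\cD$ is Gaussian; it coincides with the exact score of the moment-matched Gaussian $\cN(\alpha(t)\bar\mu,\,\alpha^2(t)\bar\Sigma + h(t) I_D)$. Tweedie's formula \eqref{equ:hatx0_guidance} then renders $\hat{\EE}[x_0\mid x_t]$ affine in $x_t$, which in turn makes $\texttt{G}_{loss}(x_t,t)$ from \eqref{equ:gradient_guidance_est} affine. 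Consequently the guided backward SDE is linear at every iteration, and the generated distribution stays Gaussian, $\cN(\mu_k,\Sigma_k)$.

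I would then integrate this linear backward SDE in closed form to extract a recursion for the mean $\mu_k$. Writing $M_t = \alpha^2(t)\bar\Sigma + h(t) I_D$, the Jacobian $\nabla_{x_t}\hat{\EE}[x_0\mid x_t] = \alpha(t)\bar\Sigma M_t^{-1}$ produces a guidance drift proportional to $\bar\Sigma M_t^{-1} g_k$, where $g_k = \nabla f(\bar z_k)$. Taking $T\to\infty$, the terminal mean $\mu_{k+1}$ emerges as $\bar\mu$ plus an affine functional of $y_k$ and $g_k$ whose weights involve $\int_0^T \beta(t)\alpha(t) M_t^{-1}\,\diff t$ and analogous integrals. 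The key design step is to calibrate $\beta(t)$ so that this weighted integral equals $\bar\Sigma/\lambda$ (up to a scalar), and to pick $y_k$ so that the $y_k$-dependent offset cancels; with these choices the mean recursion collapses to the proximal-gradient update
\begin{equation*}
\mu_{k+1} \;=\; \bar\mu + \tfrac{1}{\lambda}\,\bar\Sigma\,\nabla f(\bar z_k).
\end{equation*}

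For convergence I invoke the first-order optimality condition of \eqref{equ:reg_obj w. covariance}, $x^*_\lambda = \bar\mu + \tfrac{1}{\lambda}\bar\Sigma \nabla f(x^*_\lambda)$. Subtracting from the mean recursion and using the $L$-smoothness of $f$ with respect to $\|\cdot\|_{\bar\Sigma^{-1}}$ yields
\begin{equation*}
\|\mu_{k+1}-x^*_\lambda\|_{\bar\Sigma^{-1}} \;\leq\; \tfrac{L}{\lambda}\,\|\mu_k - x^*_\lambda\|_{\bar\Sigma^{-1}} \;+\; \tfrac{L}{\lambda}\,\|\bar z_k - \mu_k\|_{\bar\Sigma^{-1}},
\end{equation*}
a strict $L/\lambda$-contraction (since $\lambda>L$) perturbed by sampling noise. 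Because $z_{k,i}\sim\cN(\mu_k,\Sigma_k)$ i.i.d., the batch mean satisfies $\bar z_k - \mu_k \sim \cN(0,\Sigma_k/B_k)$, so a Gaussian / $\chi^2$ tail bound combined with a union bound over the $K$ iterations gives $\|\bar z_k - \mu_k\|_{\bar\Sigma^{-1}}^2 = \cO\bigl(D\log(K/\delta)/B_k\bigr)$ with probability at least $1-\delta$. Choosing $\{B_k\}$ to grow geometrically absorbs the noise into the same $(L/\lambda)^K$ envelope, and converting the iterate bound to a function-value bound via concavity of $f$ together with the identity $\nabla f(x^*_\lambda) = \lambda \bar\Sigma^{-1}(x^*_\lambda - \bar\mu)$ (which supplies the $\lambda$ prefactor in \eqref{equ:conv rate fully linear}) finishes the argument.

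The main technical obstacle is the second step: integrating the time-inhomogeneous linear guided backward SDE and designing the pair $(\beta(t),y_k)$ whose \emph{cumulative} effect across $t\in[0,T]$ is exactly a $\bar\mu$-centered preconditioned-gradient step of stepsize $1/\lambda$. A closely related subtlety is tracking the recursion for the covariance $\Sigma_k$ and verifying it remains uniformly bounded over $k$, so that the Gaussian concentration in the third step does not degrade with iteration count.
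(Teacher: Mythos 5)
Your proposal is correct in substance and follows the same skeleton as the paper's argument (which proves Theorem~\ref{thm:fully linear} as the $A=I_D$ case of Theorem~\ref{thm:subspace fully linear}): explicit linear score from score matching, per-round Gaussian output with mean $\bar\mu+\tfrac{1}{\lambda}\Bar{\Sigma}\nabla f(\bar z_k)$, an $L/\lambda$-contraction perturbed by batch-mean noise, Gaussian concentration with a union bound and geometrically growing $B_k$, and a final conversion to function values. The two places where you genuinely diverge are instructive. First, for the per-round output law, you propose to integrate the time-inhomogeneous linear backward SDE and calibrate $\beta(t)$ so that the cumulative drift integral equals $\Bar{\Sigma}/\lambda$ --- the step you yourself flag as the main obstacle and leave unexecuted. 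The paper avoids this computation entirely (Lemma~\ref{lmm:posterior distribution general}): with the specific $\beta(t)=\tfrac{1}{2}\bigl(\sigma^2+g^\top\Bar{\Sigma}^{-1}(I_D+\alpha^2(t)\Bar{\Sigma}/h(t))^{-1}g\bigr)^{-1}$, the sum $s_\theta+\texttt{G}_{loss}$ is \emph{exactly} the conditional score $\nabla\log p_t(x_t\mid y)$ of the moment-matched Gaussian surrogate, so by time reversal (a Girsanov/KL argument as $T\to\infty$) the output is the Gaussian conditional law in closed form, and $y_k$ is then chosen to make the mean shift equal $\tfrac{1}{\lambda}\Bar{\Sigma}g_k$; your direct integration would succeed and recover the same $\beta(t)$, but the identification route is cleaner and is what actually resolves your ``obstacle.'' Second, for the contraction you use the fixed-point identity $x^*_\lambda=\bar\mu+\tfrac{1}{\lambda}\Bar{\Sigma}\nabla f(x^*_\lambda)$ plus $L$-smoothness alone, which is more elementary than the paper's route of rewriting the update as a preconditioned ascent step on $h(x)=f(x)-\tfrac{\lambda}{2}\|x-\bar\mu\|^2_{\Bar{\Sigma}^{-1}}$ and invoking the strong-concavity/smoothness coercivity lemma (Lemma~\ref{lmm:opt}); both give the same rate $\zeta=L/\lambda$. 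Finally, your worry about a covariance recursion is moot: in Algorithm~\ref{alg:main} the score is never updated, so each round's backward process uses the same fixed $s_\theta$ and the output covariance is always $\Bar{\Sigma}$ minus a positive semidefinite rank-one correction, hence $\preceq\Bar{\Sigma}$ uniformly in $k$, which is exactly what the paper uses to bound $\operatorname{trace}(\VV(z_{k,i})\Bar{\Sigma}^{-1})\le D$ in the concentration step.
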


\begin{proof}
The proof is a special case of Theorem~\ref{thm:subspace fully linear} in Appendix~\ref{pf:subspace fully linear}, via setting the representation matrix $A = I_D$.
\end{proof}

\subsection{Proof of Theorem~\ref{thm:subspace fully linear}} \label{pf:subspace fully linear}

We first provide a proof sketch.

\paragraph{Proof Sketch} Solving the score matching problem \eqref{equ:score match objective} with a linear function class \eqref{equ:fully linear class} yields a pre-trained score as follows
\begin{equation*}
{s}_{\theta}(x_t,t)= - \rbra{\alpha^2(t) \Bar{\Sigma} + h(t)I_D }^{-1}\rbra{ x_t - \alpha(t)\bar{\mu}}.
\end{equation*}
With proper choices of $\beta(t)$, gradient guidance $\texttt{G}_{loss}$ leads to the following output distribution at the end of round $k$:  
\begin{equation*}
     \cN\left(\bar{\mu} +  \frac{y_k - g_k^\top \bar{\mu}}{\sigma^2 + g_k^\top \Bar{\Sigma} g_k} \Bar{\Sigma}  g_k,  \Bar{\Sigma} -  \frac{\Bar{\Sigma} g_k   g_k^\top \Bar{\Sigma}}{\sigma^2 + g_k^\top \Bar{\Sigma} g_k}  \right).
\end{equation*}
Thus, we obtain the mean of the above distribution, i.e., $\mu_{k+1} = \bar{\mu} + \eta_{k} \Bar{\Sigma} \nabla f(\bar{z}_{k})$, where $\bar{z}_k$ is the empirical mean of previous samples, $\eta_k$ is a stepsize determined by $y_k$. %$\cH_k$ is the information accumulated after $k$ rounds. 
By a rearrangement, we obtain a recursive formula
\begin{equation}
\label{equ:up_rule_GD}
\mu_{k+1} = \bar{z}_{k} + \eta_k \Bar{\Sigma} \sbra{ \nabla f(\bar{z}_{k}) - \eta_k^{-1} \Bar{\Sigma}^{-1} \rbra{\bar{z}_{k}-\bar{\mu}}}.
\end{equation}
We observe that \eqref{equ:up_rule_GD} resembles a gradient ascent update from $\mu_{k} \approx \bar z_k$ to $\mu_{k+1}$ corresponding to a regularzed optimization problem \eqref{equ:reg_obj w. covariance}. In this regularized objective,  the original objective $f(x)$ incorporates an additional proximal term with $\lambda := 1/\eta_k$. 
Therefore we can analyze the convergence of $\mu_k$ by following the classical argument for gradient optimization.

\begin{proof}[Proof of Theorem~\ref{thm:subspace fully linear}]
Define a filtration $\cbra{\cH_k}_{k=0}^{K-1}$ with $\cH_k$ be the information accumulated after $k$
rounds of Alg.\ref{alg:main}.
\begin{equation*}
\begin{aligned}
      \cH_0 &:= \sigma (\bar{\mu}), \\
      \cH_k &:= \sigma \rbra{\cH_{k-1}, \sigma \rbra{z_{k-1,1}, \ldots, z_{k-1,B_{k-1}}} }, \quad k \in [K].
\end{aligned}
\end{equation*}
Define the expectation of samples generated at $k$-th round as
\begin{equation*}
    \mu_k := \EE [z_{k,i} \mid \cH_{k-1}], \quad k \in [K-1].
\end{equation*}
 Applying \cref{coro:pretrain score matching}, we get the pre-trained score as
\begin{equation*}
    {s}_{\theta}(x_t,t)= - \rbra{\alpha^2(t) \Bar{\Sigma} + h(t)I_D }^{-1}\rbra{ x_t - \alpha(t) \bar{\mu}},
\end{equation*}
If we set $y_k$ as follows 
\begin{equation*}
    y_k = \eta \cdot \rbra{\sigma^2 + g_k^\top \Bar{\Sigma} g_k } + g_k^\top  \bar{\mu},
\end{equation*}
where $\eta = 1/ \lambda$. 
And we choose $\beta(t)$ at  $k$-round as
$\beta(t) = \fs{1}{2} \rbra{\sigma^2 + g_{k-1}^\top \Bar{\Sigma}^{-1}  \rbra{ I_D +  \alpha^2(t)\Bar{\Sigma}/h(t) }^{-1} g_{k-1}}^{-1}$.
Then, Lemma~\ref{lmm:posterior distribution general} provides the generated distribution in $k$-th round:
\begin{equation}\label{equ:generated_gaussian}
    \cN \rbra{\bar{\mu} + \eta \Bar{\Sigma} g_{k-1}, \Bar{\Sigma} -  \frac{\Bar{\Sigma} g_{k-1}   g_{k-1}^\top \Bar{\Sigma}}{\sigma^2 + g_{k-1}^\top \Bar{\Sigma} g_{k-1}} }.
\end{equation}
 Define  the empirical covariance matrix of the latent variable $U$ as $ \Bar{\Sigma}_{u} = {\EE}_{x \in \cD, u = A^{\top}x}[\rbra{u-\bar{u}}\rbra{u-\bar{u}}^\top] $ where $\bar{u} = {\EE}_{\cD}[u] $. Then in the subspace setting, the empirical mean and covariance of data $X$ can be written as $ \bar{\mu} = AA^\top \bar{\mu}$ and $ \Bar{\Sigma} = A\Bar{\Sigma}_{u}A^\top $ respectively. The mean of the sample $z_{k,i}$ follows
\begin{equation*}
   \mu_k =  \EE [z_{k,i} \mid \cH_{k-1}] = AA^\top \bar{\mu} + \eta  \cdot A \Bar{\Sigma}_{u} A^\top   g_{k-1},
\end{equation*}
where $g_{k-1} = \nabla f \rbra{\bar{z}_{k-1}}$ and $\bar{z}_{k-1} =  \rbra{1/B} \sum_i^B z_{k-1, i} $. We rearrange the update rule to show a gradient ascent formula as follows
\begin{equation*}
\begin{aligned}
   \mu_k &= AA^\top \mu_{k-1} - AA^\top \rbra{\mu_{k-1} - \bar{\mu} } +  \eta  \cdot A \Bar{\Sigma}_{u} A^\top   \nabla f(\mu_{k-1}) + \eta  \cdot A \Bar{\Sigma}_{u} A^\top  \rbra{ g_{k-1}-\nabla f(\mu_{k-1})}\\
    &= AA^\top \mu_{k-1} - A\Bar{\Sigma}_{u} A^\top A\Bar{\Sigma}_u^{-1} A^\top \rbra{\mu_{k-1} - \bar{\mu} } +  \eta  \cdot A \Bar{\Sigma}_{u} A^\top   \nabla f(\mu_{k-1}) + \eta  \cdot A \Bar{\Sigma}_{u} A^\top  \rbra{ g_{k-1}-\nabla f(\mu_{k-1})} \\
    &= AA^\top \mu_{k-1} + \eta  \cdot A \Bar{\Sigma}_{u} A^\top \sbra{ \nabla f(\mu_{k-1}) - \lambda A \Bar{\Sigma}_u^{-1} A^\top \rbra{ \mu_{k-1} - \bar{\mu}}  }  + \eta  \cdot A \Bar{\Sigma}_{u} A^\top  \rbra{ g_{k-1}-\nabla f(\mu_{k-1})}.
\end{aligned}
\end{equation*}
where $\lambda = 1/ \eta$. Define $h(x) := f(x) - \lambda / 2 \dabs{x - \bar{\mu}}^2_{\Bar\Sigma^{-1}} $, we have
\begin{equation*}
\begin{aligned}
   \mu_k &= AA^\top \mu_{k-1} + \eta  \cdot \Bar\Sigma  \nabla h(\mu_{k-1})  + \eta  \cdot \Bar\Sigma \rbra{ g_{k-1}-\nabla f(\mu_{k-1})}.
\end{aligned}
\end{equation*}
Recall the notation for the optimum: $x^\star_{A,\lambda} = \argmax_{x=Au} h(x)$. 
%All samples and expectations lying in the subspace according to Lemma~\ref{lmm:posterior distribution general}, 
We consider the distance of $\mu_k$ to $x^\star_{A,\lambda}$ under the semi-norm $ \dabs{ \cdot}_{\Bar\Sigma^{-1}}$.
\begin{align}
    \dabs{\mu_k  - x^\star_{A,\lambda} }_{\Bar\Sigma^{-1}} &= \dabs{\mu_{k-1} - x^\star_{A,\lambda} + \eta \Bar\Sigma  \nabla h(\mu_{k-1}) + \eta \Bar\Sigma  \rbra{ g_{k-1}-\nabla f(\mu_{k-1})}}_{\Bar\Sigma^{-1}} \nonumber \\
    & \leq  \underbrace{\dabs{\mu_{k-1} - x^\star_{A,\lambda} + \eta \Bar\Sigma  \nabla h(\mu_{k-1})}_{\Bar\Sigma^{-1}}}_{:= I_1} + \underbrace{\dabs{ \eta \Bar\Sigma  \rbra{ g_{k-1}-\nabla f(\mu_{k-1})}}_{\Bar\Sigma^{-1}}}_{:= I_2}. \label{equ:distance two terms to bound Sigma}
\end{align}
We bound the second term $I_2$ first. According to $f$ is $L$-smooth with respect to $ \dabs{  \cdot}_{\Bar\Sigma^{-1}} $, we have 
\begin{equation*}
    \begin{aligned}
         I_2 &= \eta \dabs{  g_{k-1}-\nabla f(\mu_{k-1})}_{\Bar\Sigma} \leq \eta L  \dabs{ \bar{z}_{k-1}-\mu_{k-1}}_{\Bar\Sigma^{-1}},
    \end{aligned}
\end{equation*} Lemma~\ref{lmm:posterior distribution general} shows the distribution of $z_{k-1,i}$. Therefore, according to concentration inequality for Gaussian distribution, with the probability at least $1 - \delta / K $, it holds $$\dabs{ \bar{z}_{k-1} - \mu_{k-1}}^2_{\Bar\Sigma^{-1}} \leq 2 \log \rbra{\frac{2K}{\delta}} \cdot \fs{\text{trace}\rbra{\VV({z}_{k-1,i}) \cdot \Bar\Sigma^{-1}}}{B_{k-1}}. $$ 
We have $\text{trace}\rbra{\VV({z}_{k-1,i}) \cdot \Bar\Sigma^{-1} } \leq \text{trace}\rbra{\Bar{\Sigma} \cdot \Bar\Sigma^{-1} } = d $. Therefore, $I_2$ is bounded by $$I_2 \leq M_0 /\sqrt{B_{k-1}}, $$ where $M_0:=\eta L \sqrt{ 2 \log \rbra{\frac{2K}{\delta}} \cdot d} $.
Next, we consider the first term in \eqref{equ:distance two terms to bound Sigma}. Since $x^*_{A,\lambda}$ is the optimum of $h$ within $\operatorname{Span}(A)$, the gradient $\nabla h(x^*_{A,\lambda})$ is in the orthogonal subspace, i.e., $A^\top \nabla h(x^*_{A,\lambda}) = 0 $, thus $\Bar\Sigma  \nabla h(x^*_{A,\lambda}) = 0  $. The first term in \eqref{equ:distance two terms to bound Sigma} can be written as
\begin{equation*}
    \begin{aligned}
         I_1^2 &= \dabs{ \rbra{\mu_{k-1} - x^\star_{A,\lambda}} + \eta \Bar\Sigma \rbra{\nabla h(\mu_{k-1})-\nabla h(x^*_{A,\lambda})}}_{\Bar\Sigma^{-1}}^2 \\
        &=  \dabs{\mu_{k-1}  - x^\star_{A,\lambda} }_{{\Bar\Sigma}^{-1}}^2+ \eta^2 \dabs{ \nabla h(\mu_{k-1})-\nabla h(x^*_{A,\lambda})}_{\Bar\Sigma}^2 \\
        &+ 2 \pbra{\mu_{k-1} - x^\star_{A,\lambda},   \eta \rbra{\nabla h(\mu_{k-1})-\nabla h(x^*_{A,\lambda})} }. 
    \end{aligned}
\end{equation*}
Recall $h$ is $f$ adding a $\dabs{\cdot}_{\Bar\Sigma^{-1}}$ regularized term. We get $h$ is $(L+ \lambda )$-smooth with respect to semi norm $\dabs{\cdot}_{\Bar\Sigma^{-1}}$ which is derived from $f$ $L$-smooth. Also, $h$ is $\lambda  $-strongly concave with respect to semi norm $\dabs{\cdot}_{\Bar\Sigma^{-1}}$ since $f$ is concave. According to Lemma~\ref{lmm:opt}, we derive
\begin{equation*}
    \begin{aligned}
         \pbra{\mu_{k-1} - x^\star_{A,\lambda},   \nabla h(\mu_{k-1})-\nabla h(x^*_{A,\lambda})} \leq - \fs{\lambda (L + \lambda)}{L+ 2\lambda } \dabs{\mu_{k-1} - x^*_{A,\lambda}}^2_{\Bar\Sigma^{-1}}  - \fs{1}{L+ 2\lambda } \dabs{\nabla h(\mu_{k-1})-\nabla h(x^*_{A,\lambda})}^2_{\Bar\Sigma}.
    \end{aligned}
\end{equation*}
Plugin the formula of $I_1$, we get
\begin{equation*}
    \begin{aligned}
        I_1^2 &\leq \rbra{1 - \fs{2 \eta \lambda (L + \lambda)}{L+ 2\lambda } } \dabs{\mu_{k-1}  - x^\star_{A,\lambda} }_{{\Bar\Sigma}^{-1}}^2+ \rbra{\eta^2 - \fs{2 \eta }{L+ 2\lambda } } \dabs{ \nabla h(\mu_{k-1})-\nabla h(x^*_{A,\lambda})}_{\Bar\Sigma}^2.
    \end{aligned}
\end{equation*}
Since $\eta = 1 / \lambda $, it holds $\eta^2 - \fs{2 \eta }{L+ 2\lambda } > 0  $. Due to $h$ $(L + \lambda)$-smoothness, we get
\begin{equation*}
    \begin{aligned}
        I_1^2 &\leq \rbra{1 - \fs{2 \eta \lambda (L + \lambda)}{L+ 2\lambda } } \dabs{\mu_{k-1}  - x^\star_{A,\lambda} }_{{\Bar\Sigma}^{-1}}^2+ \rbra{\eta^2 - \fs{2 \eta }{L+ 2\lambda } }(L + \lambda)^2 \dabs{\mu_{k-1}  - x^\star_{A,\lambda} }_{{\Bar\Sigma}^{-1}}^2 \\
        &= \rbra{1 - \eta \rbra{L+\lambda}}^2 \dabs{\mu_{k-1}  - x^\star_{A,\lambda} }_{{\Bar\Sigma}^{-1}}^2, 
    \end{aligned}
\end{equation*}
thus, we get the bound of $I_1$ 
\begin{equation*}
    I_1 \leq \zeta \dabs{\mu_{k-1}  - x^\star_{A,\lambda} }_{{\Bar\Sigma}^{-1}},
\end{equation*}
where $\zeta := \abs{1 - \eta \rbra{L+\lambda}} $. 
Combing the upper bound of $I_1$ and $I_2$, we get with probability at least $1 - \delta/K$, for $1 < k \leq K$,
\begin{equation*}
     \dabs{\mu_k  - x^\star_{A,\lambda} }_{\Bar\Sigma^{-1}} \leq \zeta  \dabs{\mu_{k-1}  - x^\star_{A,\lambda} }_{\Bar\Sigma^{-1}} + \fs{M_0}{\sqrt{B_{k-1}}}.
\end{equation*}
As for $k=1$, by similar derivation, we can obtain $\dabs{\mu_{1}  - x^\star_{A,\lambda} }_{\Bar\Sigma^{-1}} \leq \zeta  \dabs{{z}_{0}  - x^\star_{A,\lambda} }_{\Bar\Sigma^{-1}}$.
By induction, we get with probability at least $1 - \rbra{(K-1)/K}\delta$, 
\begin{equation*}
    \begin{aligned}
        \dabs{\mu_K  - x^\star_{A,\lambda} }_{\Bar\Sigma^{-1}} \leq \zeta^K  \dabs{{z}_{0}  - x^\star_{A,\lambda} }_{\Bar\Sigma^{-1}}  + M_0 \sum_{k=1}^{K-1}\fs{\zeta^{K-k-1}}{\sqrt{B_k}}.
    \end{aligned}
\end{equation*}
Choose $B_k \geq \zeta^{-4k}(1-\zeta)^{-2} $ for all $k \in [K-1]$, then we can get 
\begin{equation}\label{equ:final iter distance Sigma}
    \dabs{\mu_K - x^\star_{A,\lambda} }_{\Bar\Sigma^{-1}} \leq \zeta^{K} \rbra{ \dabs{{z}_{0}  - x^\star_{A,\lambda} }_{\Bar\Sigma^{-1}}^2 + {M_1 \cdot\sqrt{d}   } }
\end{equation}
where $M_1 := \eta L 
\sqrt{2\log \rbra{\fs{2K}{\delta}}} $.
Since $h$ is $(L + \lambda)$-smooth with respect to $\dabs{\cdot}_{\Bar\Sigma^{-1}}$, it holds
\begin{equation*}
    \abs{h(\mu_K) - h(x^*_{A,\lambda}) - \pbra{\nabla h(x^*_{A,\lambda}), \mu_K - x^*_{A,\lambda}}} \leq \fs{L + \lambda }{2} \dabs{\mu_K - x^*_{A,\lambda}}^2_{\Bar\Sigma^{-1}}. 
\end{equation*}
Considering that $\nabla h(x^*_{A,\lambda}) \perp \operatorname{Span}(A) $ yields $\pbra{\nabla h(x^*_{A,\lambda}), \mu_K - x^*_{A,\lambda}} = 0$, we obtain the following by rearranging the equation above
\begin{align}
       f\rbra{x^*_{A,\lambda}} - f(\mu_K)  &\leq  \fs{\lambda}{2}\rbra{\dabs{x^*_{A,\lambda}-\bar{\mu}}^2_{\Bar\Sigma^{-1}} - \dabs{\mu_K -\bar{\mu} }^2_{\Bar\Sigma^{-1}}}  + \fs{L + \lambda}{2} \dabs{\mu_K - x^*_{A,\lambda}}^2_{\Bar\Sigma^{-1}} \label{equ:smooth rearrange Sigma} \\
       &\leq \sbra{\lambda \dabs{\bar{\mu} - x^*_{A,\lambda}}_{\Bar\Sigma^{-1}}\dabs{\mu_K - x^*_{A,\lambda}}_{\Bar\Sigma^{-1}} + (L + \lambda )\dabs{\mu_K - x^*_{A,\lambda}}^2_{\Bar\Sigma^{-1}}} . \nonumber
\end{align}
Substitute \eqref{equ:final iter distance Sigma} into above upper bound, with ${z}_0 = \bar{\mu}$ we have
\begin{equation*}
    \begin{aligned}
        f\rbra{x^*_{A,\lambda}} - f(\mu_K)  \lesssim  \zeta^{K} \cdot (L + \lambda  ) \sbra{ \dabs{\bar{\mu} - x^*_{A,\lambda}}_{\Bar\Sigma^{-1}}^2 + M_1^2 d}. 
    \end{aligned}
\end{equation*}
Since $ \dabs{\bar{\mu} - x^*_{A,\lambda}}_{\Bar\Sigma^{-1}}^2 = \dabs{A^\top \rbra{\bar{\mu} - x^*_{A,\lambda}}}_{\Bar{\Sigma}_u^{-1}}^2$ is the distance within $\operatorname{Span}(A)$, i.e., $\cO(d)$. Recall $\eta = 1/ \lambda$, $\zeta = \abs{1 - \eta \rbra{L+\lambda}} = {L /\lambda } $, $\lambda > L$, and $M_1 = \eta L \sqrt{2 \log \rbra{\fs{2K}{\delta}}} $. Therefore, we get the final result:
\begin{equation*}
    f\rbra{x^*_{A,\lambda}} - f(\mu_K)  \lesssim \lambda \rbra{ \fs{L}{\lambda}}^K d \log \rbra{\frac{K}{\delta}}, \quad \text{w.p.} 1 - \delta.
\end{equation*}

\end{proof}

\subsection{Proof of Theorem~\ref{thm:w. update w. subspace}} \label{sec:proof of opt theorem update}
We first provide a proof sketch.

\paragraph{Proof Sketch} The proof idea is similar to the proof of Theorem \ref{thm:subspace fully linear}.
For simplicity, we analyze the case where only the most recent sample batch $\cD_k$ is merged with $\cD_0$ for finetuning the score function. More specifically, we let $w_{k,i}=0$ for $0<i<k$ and $w_{k,0} = 1- w_{k,k}$. Similar to the proof of \cref{thm:fully linear},
 we obtain a recursive update rule given by
 \begin{equation}
 \label{equ:proj_gd}
    \mu_{k+1} =  \bar{z}_{k} + \eta_k \Bar{\Sigma} \sbra{ \nabla f(\bar{z}_{k}) - \rbra{1 - w_{k,k}}\eta_k^{-1} \cdot  \Bar{\Sigma}^{-1}  \rbra{\bar{z}_{k} - \bar{\mu}}  },
\end{equation}
where $\bar z_k\approx \mu_k$ is the empirical mean of previous samples.
This update rule also closely resembles the 
gradient ascent iteration for maximizing a regularized objective. A key difference here is that we can control the weights $w_{k,i}$ to reduce the impact of $\mathcal{D}_0$ and make the regularization term vanish to zero. Thus the mean $\mu_k$ eventually converges to the global maxima.

\begin{proof}
Define
\begin{equation*}
    \begin{aligned}
      \cH_0 &:= \sigma (\bar{\mu}), \\
      \cH_k &:= \sigma \rbra{\cH_{k-1}, \sigma \rbra{z_{k-1,1}, \ldots, z_{k-1,B_{k-1}}} }, \quad k \in [K-1], \\
      \mu_{k} &:= \EE [z_{k,i} \mid \cH_{k-1}], \quad k \in [K].
\end{aligned}
\end{equation*}
According to Lemma~\ref{lmm:general score match}, with freezing $C_t$ in class \eqref{equ:fully linear class}, the pre-trained score in Round $k$ is $s_{\theta_{k+1}}(x_t,t) = - \rbra{\alpha^2(t) \bar{\Sigma}  + h(t)I_D}\rbra{x_t - \alpha(t)\bar{x}_{k}  } $ where $\bar{x}_{k} = \sum_{j=0}^{k} w_{k,j} \bar{z}_j $ and $\bar{z}_j = {\EE}_{x \in \cD_j} [x] $. 
%Lemma~\ref{lmm:posterior distribution general} characterize the distribution $z_{k,i}$; 
By choosing $y_{k}$, and weights $w_{k,j}$ as 
\begin{equation*}
    \begin{aligned}
        y_{k} &= \eta_{k} \cdot \rbra{\sigma^2 + g_{k}^\top A\bar{\Sigma}_{u}A^\top g_{k}} + g_{k}^\top AA^\top \bar{x}_{k},\\
        w_{k,0} &= 1 - w_{k} \\
        w_{k,j} &= 0 , \,\, 1 \leq j < k, \\
        w_{k,k} &= w_{k},
    \end{aligned}
\end{equation*}
where $\eta_{k} > 0$, $0<w_{k}<1$ will be specified later. And we choose $\beta(t)$ at Round $k$  as $\beta(t) = \fs{1}{2} \rbra{\sigma^2 + g_{k-1}^\top \Bar{\Sigma}^{-1}  \rbra{ I_D +  \alpha^2(t)\Bar{\Sigma}/h(t) }^{-1} g_{k-1}}^{-1}$. Lemma~\ref{lmm:posterior distribution general} gives the mean of distribution of $z_{k+1,i}$ as
\begin{equation}\label{equ:mean k round}
    \mu_{k+1} = \bar{x}_k + \eta_k \Bar{\Sigma} g_k,
\end{equation}
and the output distribution
\begin{equation}\label{equ:adapt_gen_gaussian}
    \cN \rbra{\bar{x}_{K-1} + \eta_{k-1} \bar{\Sigma} g_{K-1}, \Bar{\Sigma} -  \frac{\Bar{\Sigma} g_{K-1}   g_{K-1}^\top \Bar{\Sigma}}{\sigma^2 + g_{K-1}^\top \Bar{\Sigma} g_{K-1}} }. 
\end{equation}
Applying Lemma~\ref{lmm:posterior distribution general}  yields ${z}_{k,i}  \in \operatorname{Span(A)} $, thus, $ \bar{x}_{k} = AA^\top \bar{x}_{k} $ and $\bar{\Sigma} = A \bar{\Sigma}_{u} A^\top $, we get the update rule reduced to
\begin{equation*}
    \begin{aligned}
        \mu_{k+1} &= AA^\top \rbra{(1-w_{k})\bar{\mu} + w_{k}\bar{z}_{k}} + \eta_{k-1} A \bar{\Sigma}_{u} A^\top g_{k-1} \\
        &=  AA^\top\bar{z}_{k} + \eta_{k}A \bar{\Sigma}_{u} A^\top \rbra{\nabla f(\bar{z}_{k}) - \eta_{k}^{-1}(1-w_{k}) A \bar{\Sigma}_{u}^{-1} A^\top\rbra{\bar{z}_{k} - \bar{\mu}} }.
    \end{aligned}
\end{equation*}
We set $w_{k} = 1- \eta_{k} \lambda$ and set $\eta_{k} = \eta$, where $\lambda, \eta > 0$ will be specified later. Therefore, we have
\begin{equation}\label{equ:shrink distance}
    \begin{aligned}
        \mu_{k+1} =  AA^\top\bar{z}_{k} + \eta A \bar{\Sigma}_{u} A^\top \nabla h_{\lambda}(\bar{z}_{k}),
    \end{aligned}
\end{equation}
where $h_{\lambda}(x):= f(x) - (\lambda/2) \dabs{x - \bar{\mu}}^2_{\Bar\Sigma^{-1}} $.
%$\Sigma = \bar{\Sigma} = A \bar{\Sigma}_{u} A^\top $, and $\Sigma^{-1}$ is Moore–Penrose inverse of $\Sigma$. 
Define $ x^\star_{A,\lambda} = \argmax_{x=Au} h_{\lambda}(x) $. With some similar steps in proof in Appendix~\ref{pf:subspace fully linear}, by choosing $B_k \geq \zeta^{-4k}(1-\zeta)^{-2}$, together with ${z}_0 = \bar{\mu}$, we get
\begin{equation*}
    \dabs{\mu_{K} -  x^\star_{A,\lambda}}_{\Bar\Sigma^{-1}} \lesssim \zeta^{K} \rbra{\dabs{\bar \mu  - x^\star_{A,\lambda} }_{\Bar\Sigma^{-1}} +  M_1 \cdot 
    \sqrt{d}    } , \quad \text{w.p.} \,\,  1- \delta,
\end{equation*}
with $\eta = \fs{2 }{L+ 2\lambda } $,  $\zeta = \abs{1 - \eta (L+\lambda)} $ and $ M_1 = 2L \sqrt{(1+\eta^2) \log \rbra{\fs{2K}{\delta}}}$. Also, we can get \eqref{equ:smooth rearrange Sigma} as in proof in Appendix~\ref{pf:subspace fully linear}. We restate it here:
\begin{equation}\label{equ:restate rearrange smooth}
    f\rbra{x^*_{A,\lambda}} - f(\tilde z_K)  \leq  \fs{\lambda}{2}\rbra{\dabs{x^*_{A,\lambda}-\bar{\mu}}^2_{\Bar\Sigma^{-1}} - \dabs{\mu_{K} -\bar{\mu} }^2_{\Bar\Sigma^{-1}}}  + \fs{L + \lambda}{2} \dabs{\mu_{K} - x^*_{A,\lambda}}^2_{\Bar\Sigma^{-1}}.
\end{equation}
Since $f$ is concave,
\begin{equation*}
    \begin{aligned}
       f\rbra{x_{A}^\ast} -  f\rbra{x^*_{A,\lambda}} \leq \pbra{\nabla  f\rbra{x^*_{A,\lambda}}, x_{A}^\ast-x^*_{A,\lambda}} = \lambda \pbra{\Bar\Sigma^{-1}(x^*_{A,\lambda}-\bar{\mu}), x_{A}^\ast-x^*_{A,\lambda}}.
    \end{aligned}
\end{equation*}
Adding \eqref{equ:restate rearrange smooth}, it holds
\begin{equation*}
    \begin{aligned}
        f\rbra{x^*_{A,\lambda}} - f(\mu_K) &\leq  \fs{\lambda}{2}\rbra{\dabs{x^*_{A,\lambda}-\bar{\mu}}^2_{\Bar\Sigma^{-1}} - \dabs{x^*_{A,\lambda}-x_{A}^\ast}^2_{\Bar\Sigma^{-1}} -  \dabs{\mu_{K} -\bar{\mu}}^2_{\Bar\Sigma^{-1}} }  + \fs{L + \lambda }{2} \dabs{\mu_{K} - x^*_{A,\lambda}}^2_{\Bar\Sigma^{-1}} .
    \end{aligned}
\end{equation*}
Due to \eqref{equ:shrink distance}, we have, it holds w.p. $1-\delta$,
\begin{equation*}
    f\rbra{x^*_{A,\lambda}} - f(\mu_K) \lesssim  \sbra{\lambda \dabs{x_{A}^\ast - \bar{\mu}}^2_{\Bar\Sigma^{-1}} + (L+\lambda) \zeta^K \rbra{\dabs{\Bar{\mu} - x^*_{A,\lambda}}^2_{\Bar\Sigma^{-1}} +  M_1 d} }.
\end{equation*}
We choose $\lambda = L \log K /(4K)$ and get
\begin{equation*}
\begin{aligned}
    f\rbra{x^*_{A,\lambda}} - f(\mu_K) &\lesssim \fs{L\log K}{K} \cdot \sbra{ \dabs{x_{A}^\ast - \bar{\mu}}^2_{\Bar\Sigma^{-1}} + \dabs{\Bar{\mu} - x^*_{A,\lambda}}^2_{\Bar\Sigma^{-1}} +  M_1 d}, \,\, \text{w.p.} \,\,  1- \delta.
\end{aligned}
\end{equation*}
With assuming  $\dabs{x^\star_{A,\lambda}}$ is bounded, we derive
\begin{equation*}
    f\rbra{x^*_{A,\lambda}} - f(\mu_K) = \cO \rbra{\fs{dL^2\log K}{K} \cdot \log \rbra{\frac{K}{\delta}}}, \,\, \text{w.p.} \,\,  1- \delta.
\end{equation*}

\subsection{Auxiliary Lemma}
The following is a standard result in convex optimization utilized in previous proofs.
\begin{lemma} \label{lmm:opt}
    Let $f$ be $\alpha$-strongly concave and $\beta$-smooth with respect to the (semi) norm $\dabs{\cdot}_{\Sigma^{-1}}$, for all $x$ and $y$, it holds
    \begin{equation}
        - \pbra{\nabla f(x) - \nabla f(y), x-y} \geq \fs{\alpha \beta}{\alpha+\beta} \dabs{x - y}^2_{\Sigma^{-1}} + \fs{1}{\alpha + \beta} \dabs{\nabla f(x) - \nabla f(y)}^2_{\Sigma}.
    \end{equation}
\end{lemma}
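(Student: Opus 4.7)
The plan is to reduce the claim to the classical co-coercivity (Baillon--Haddad) inequality for convex smooth functions, adapted to the (semi)norm $\dabs{\cdot}_{\Sigma^{-1}}$. First I would pass from concavity to convexity by setting $g := -f$; by hypothesis $g$ is $\alpha$-strongly convex and $\beta$-smooth with respect to $\dabs{\cdot}_{\Sigma^{-1}}$ (smoothness meaning $\dabs{\nabla g(x)-\nabla g(y)}_{\Sigma}\le \beta\,\dabs{x-y}_{\Sigma^{-1}}$). The target inequality rewrites as
\begin{equation*}
\pbra{\nabla g(x) - \nabla g(y),\, x-y} \;\geq\; \fs{\alpha\beta}{\alpha+\beta}\dabs{x-y}^2_{\Sigma^{-1}} + \fs{1}{\alpha+\beta}\dabs{\nabla g(x) - \nabla g(y)}^2_{\Sigma}.
\end{equation*}

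Next I would introduce the de-regularized auxiliary $h(x) := g(x) - \fs{\alpha}{2}\dabs{x}^2_{\Sigma^{-1}}$, whose purpose is to absorb the strong-convexity modulus. Using the quadratic upper and lower bounds that strong convexity and smoothness imply for $g$, a short computation shows
\begin{equation*}
0 \;\leq\; h(y) - h(x) - \pbra{\nabla h(x),\,y-x} \;\leq\; \fs{\beta-\alpha}{2}\dabs{y-x}^2_{\Sigma^{-1}},
\end{equation*}
so $h$ is convex and $(\beta-\alpha)$-smooth with respect to $\dabs{\cdot}_{\Sigma^{-1}}$. The Baillon--Haddad theorem then gives the co-coercivity inequality
\begin{equation*}
\pbra{\nabla h(x)-\nabla h(y),\, x-y} \;\geq\; \fs{1}{\beta-\alpha}\dabs{\nabla h(x)-\nabla h(y)}^2_{\Sigma}.
\end{equation*}

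The remaining work is purely algebraic. Unpacking $\nabla h(\cdot)=\nabla g(\cdot)-\alpha\,\Sigma^{-1}(\cdot)$, the left side equals $\pbra{\nabla g(x)-\nabla g(y),x-y}-\alpha\dabs{x-y}^2_{\Sigma^{-1}}$, and the right side expands via the identities $\dabs{\Sigma^{-1}(x-y)}^2_{\Sigma}=\dabs{x-y}^2_{\Sigma^{-1}}$ and $(\nabla g(x)-\nabla g(y))^{\top}\Sigma\cdot\Sigma^{-1}(x-y)=\pbra{\nabla g(x)-\nabla g(y),x-y}$. Multiplying through by $\beta-\alpha$ and collecting like terms yields $(\alpha+\beta)\pbra{\nabla g(x)-\nabla g(y),x-y}\;\geq\;\dabs{\nabla g(x)-\nabla g(y)}^2_{\Sigma}+\alpha\beta\dabs{x-y}^2_{\Sigma^{-1}}$, which after dividing by $\alpha+\beta$ is exactly the target.

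The one technical subtlety is that when $\Sigma$ is only positive semidefinite, $\dabs{\cdot}_{\Sigma^{-1}}$ is a genuine seminorm (with $\Sigma^{-1}$ taken as the Moore--Penrose pseudoinverse), and the hypotheses only constrain $f$ along $\operatorname{Range}(\Sigma)$. Since both sides of the target inequality depend on $(x,y)$ only through quantities that respect this degeneracy, a change of variables $u=\Sigma^{\dagger/2}(x-y)$ restricted to $\operatorname{Range}(\Sigma)$ reduces the statement to the textbook Euclidean co-coercivity bound (see, e.g., Theorem~2.1.12 of Nesterov, or \citet{bubeck2015convex}). I do not expect any individual step to be a major obstacle; the main thing to be careful about is bookkeeping with the dual geometry $(\Sigma,\Sigma^{-1})$ in the algebraic expansion above.
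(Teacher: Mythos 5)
Your proposal is correct, and it is essentially the same route as the paper's: the paper does not prove the lemma itself but cites \citet[Lemma 3.11]{bubeck2015convex}, whose standard proof is exactly your argument---pass to $g=-f$, subtract the strong-convexity quadratic to form $h=g-\fs{\alpha}{2}\dabs{\cdot}^2_{\Sigma^{-1}}$, apply co-coercivity of the convex $(\beta-\alpha)$-smooth $h$, and expand---transported to the $(\Sigma,\Sigma^{-1})$ dual geometry. The only detail worth adding is the degenerate case $\beta=\alpha$, where the factor $1/(\beta-\alpha)$ is undefined and a separate one-line (or limiting) argument is needed.
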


\begin{proof}
    See \citet[Lemma 3.11]{bubeck2015convex} for a proof.
\end{proof}

\end{proof}
\section{Additional Materials for Experiments}
%\DoToC
\stopcontents 
\label{sec:experiment details}

\subsection{Simulation}\label{sec:simulation_detail}
We experiment with our design of the gradient guidance as well as \cref{alg:main} and \cref{alg:main_w_update}. Going beyond our theoretical assumptions, we adopt a 15M-parameter U-Net as the score function class for training and fine-tuning our diffusion model. 

\subsubsection{Experiment Setup}
\label{training_details}

For linear data structure, we set the data's ambient dimension as $D = 64$ and the linear subspace dimension as $d = 16$. The linear subspace is represented by an orthogonal matrix $A \in \RR^{D \times d}$. We randomly generate a matrix $A$ and fix it once generated. After that, we sample a data point  $X$ by first randomly sampling a latent variable $U \sim \cN(0, I_d)$ and computing $X = AU$. We independently sample a total of $65536$ data points as our pre-training data set. For nonlinear data structure, data are uniformly sampled from a unit ball in $\mathbb{R}^{64}$.

 The objective functions considered in our experiments are
$f_1(x) = 10 - (\theta^\top x - 3)^2$ and $f_2(x) = 5 - 0.5 \lVert x - b \rVert.$
Here, $\theta$ and $b$ are randomly generated and fixed afterward. 
Since our data assumes a low-dimensional subspace representation, it is convenient to decompose $\theta$ into $\theta_{\perp} = (I - AA^\top) \theta$ and $\theta_{\parallel} = AA^\top \theta$, representing the off-support and on-support components. We refer to $\frac{\lVert \theta_{\perp}\rVert}{\lVert \theta_{\parallel} \rVert}$ as the off/on-support ratio. Analogously, for a generated sample, we can also define its off/on-support ratio. Clearly, a small off/on-support ratio indicates close vicinity to the subspace.

\paragraph{Score Network Pre-training.} We utilize a version of the U-Net \citep{ronneberger2015u}, with 14.8M trainable parameters. Note that this is a complicated network going beyond the linear score function class considered in our theories. Following the implementation of Denoising Diffusion Probabilistic Models (DDPM, \citet{ho2020denoising}), we train the U-Net o estimate the score function $\nabla \log p_t$, via minimizing the score matching loss introduced in Eqn.~\eqref{equ:score match objective}. We discretize the backward process to have $200$ time steps as in \citet{nichol2021improved}, and the U-Net is trained using our generated data set for 20 epochs. We use Adam as the optimizer, set the batch size as $32$, and set the learning rate to be $10^{-4}$. After the pre-training phase, we confirmed that the data subspace structure is well learned, as the generated samples using the pre-trained diffusion model have an average off/on-support ratio of $0.039$.

\paragraph{Implementation of \cref{alg:main}.}
In each iteration of \cref{alg:main}, we need to compute the gradient guidance $\texttt{G}_{loss}$. We set the targeted $y$ value at the $k$-th iteration as $y_k = \delta + g_k^\top z_k$, where $\delta_k$ specifies the increment per iteration. The choice on $\delta_k$ is instance-dependent and we set it via tuning for near-optimal in different experiments. For comparing naive gradient with gradient guidance in Figure~\ref{fig:cmp}, we set $\delta = 0.2$ and $0.9$, respectively for using naive gradient $\texttt{G}$ and gradient guidance $\texttt{G}_{loss}$. In Figure~\ref{fig:regularized}, we choose $\delta$ to be (a) $0.05$, (b) $0.2$, (c) $1$, and (d) $1$, corresponding to each panel. We initialize Algorithm~\ref{alg:main} with a batch of $32$ samples generated by the pre-trained model. Each sample determines an optimization trajectory. We repeat Algorithm~\ref{alg:main} for $5$ times with different random seeds and report the error bars. 

\paragraph{Implementation of \cref{alg:main_w_update}.}
Algorithm~\ref{alg:main_w_update} differs from Algorithm~\ref{alg:main} in that it allows additional fine-tuning of the pre-trained score network. In practice, to update the score network incorporating newly generated data, one does not have to exactly solve \eqref{equ:score update objective} by re-training the full model from scratch. Instead, \eqref{equ:score update objective} can be viewed as a guideline that motivates more computationally efficient ways for updating the pre-trained score. It is a common practice to only \textbf{fine-tune} the weights of the old model by performing gradient descent over a few batches of newly generated data, which is similar to the spirit of \eqref{equ:score update objective}. To be more specific, we adopt a computationally lightweight fine-tuning strategy: We only perform one Adam optimization step using the re-weighted loss given by Eqn.~\eqref{equ:score update objective} with a batch of $32$ generated samples. We set the learning rate as $10^{-6}$. This simple strategy already demonstrates good performances as shown in Figure~\ref{fig:alg2}. Other implementation details are kept the same as those of Algorithm~\ref{alg:main}.

%We run all experiments using one NVIDIA A100 GPU. Module \ref{alg:backward} takes 4.6 seconds to generate a sample.  \cref{alg:main} takes 76 minutes, and \cref{alg:main_w_update} takes 91 minutes.

\subsubsection{Results}
We first demonstrate our gradient guidance $\texttt{G}_{loss}$ preserves the subspace structure learned from the pre-trained model. 
For comparison, we also tested the naive  guidance $\texttt{G}$ defined following \cref{prop:g=scaler_grad} (with $\Sigma=I$). For a quick reference, we repeat the definition here:
\begin{equation*}
        \texttt{G}(x_t, t) :=  \beta(t) \rbra{y - g^\top \EE[x_0|x_t]}g,
\end{equation*}
where $\beta(t) > 0$ and $y \in \RR$ are tuning parameters, and $\EE[x_0|x_t]$ is the conditional expectation of $x_0$ given noise corrupted data $x_t$. For implementation, we replace $\EE[x_0 | x_t]$ by its look-ahead estimator $\hat{\EE}[x_0 | x_t]$ based on the Tweedie's formular.

\paragraph{Comparing \texttt{G} and $\texttt{G}_{loss}$ on Preserving Subspace Structure.}  \cref{fig:cmp} (a), (c) verify that the naive gradient $\texttt{G}$ performs much worse than $\texttt{G}_{loss}$ in preserving the linear subspace structure. It is consistent with our theoretical finding that the gradient guidance $\texttt{G}_{loss}$ keeps the generated sample close to the latent subspace, with substantially smaller off-support errors. When allowing adaptive score fine-tuning in Algorithm~\ref{alg:main_w_update}, \cref{fig:cmp} (b), (d) show that the off-support error increases as the model gets fine-tuned using self-generated data, due to increasing distribution shift. Even in this case, the naive gradient $\texttt{G}$ leads to much more severe off-support errors as compared to $\texttt{G}_{loss}$.

\begin{figure}[!htb]
\centering

\subfigure[Algorithm~\ref{alg:main}]{\includegraphics[width=0.24\textwidth]{figures/figure_2/v1_v2_theta_9_ratio_100_v5.png}}
\subfigure[Algorithm~\ref{alg:main_w_update}]{\includegraphics[width=0.24\textwidth]{figures/figure_2/rebuttal_v1_v2_theta_9_update_score_revised_ratio_100_v5.png}}
 \subfigure[300-350 round of (a)]{\includegraphics[width=0.24\textwidth]{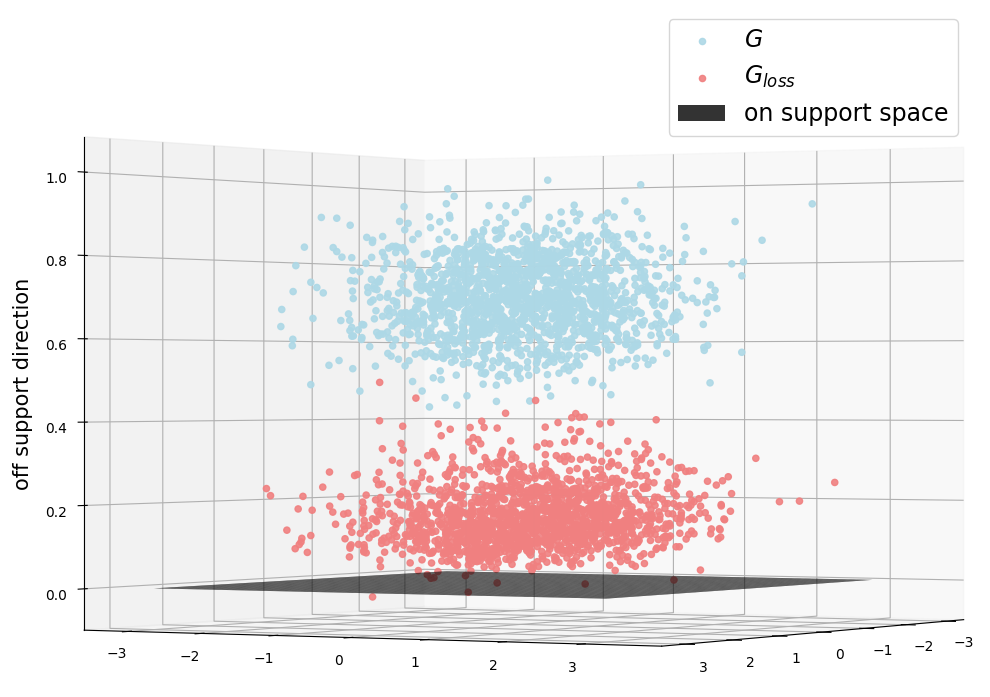}}
  \subfigure[1000-1200 round of (b)]{\includegraphics[width=0.24\textwidth]{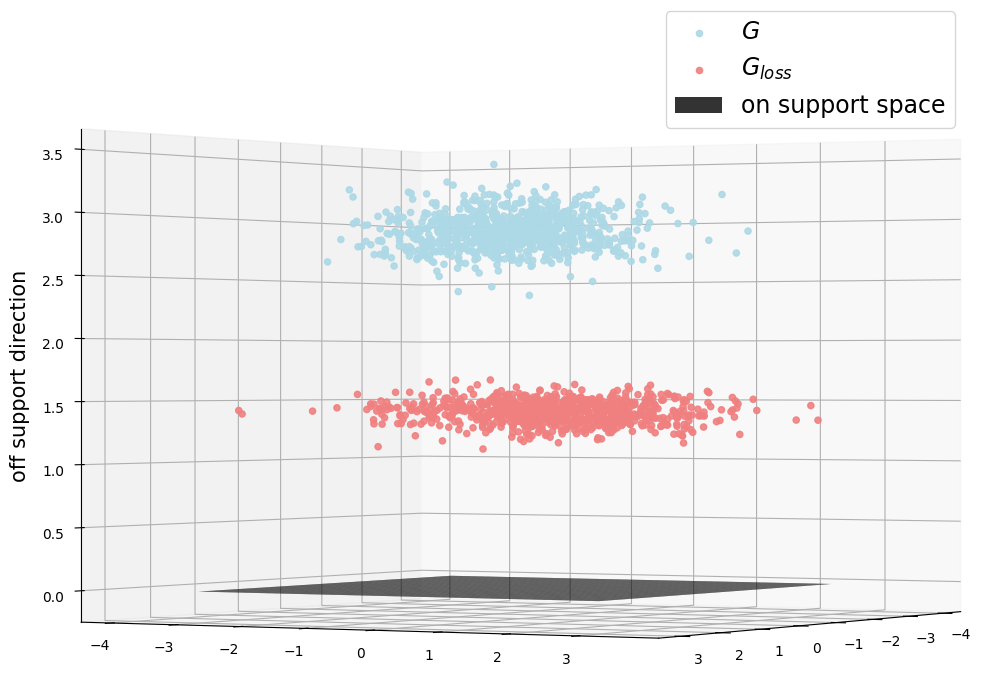}}

\caption
{\small{
\textbf{Comparison between two types of gradient guidance $\texttt{G}$ and $\texttt{G}_{loss}$.} We plot the off/on support ratio of the generated samples, denoted by $r_\mathrm{off} =\frac{\Vert{x_\bot}\Vert}{\Vert{x_\Vert}\Vert}$. The objective function is $f_1(x)$, with $\theta$ having an off/on-support ratio of $9$. 
}}
\label{fig:cmp}
\end{figure}

%%%%%%%%%%%%%%%%%%%%%%%%%%%%%%%%%%%%%
\paragraph{\cref {alg:main} Converges to Regularized Optima.} 
We plot the convergence of Algorithm~\ref{alg:main} in terms of the objective value in \cref{fig:regularized}. Figure~\ref{fig:regularized} (a),(b) are for the objective function $f_1= 10 - (\theta^\top x - 3)^2$ as the objective function, while Figure~\ref{fig:regularized}(c),(d) are for the objective $f_2= 5 - 0.5 \lVert x - b \rVert$. We observe that the algorithm converges to reach some sub-optimal objective value, but there remains a gap to the maximal value. This is consistent with our theory that the pre-trained model essentially acts as a regularization in addition to the objective function. Adding gradient guidance alone cannot reach global maxima. This coincides with our theoretical findings in Theorem~\ref{thm:subspace fully linear}.

\begin{figure}[!htb] 
\centering
\subfigure[$\theta=A\beta^*$]{\includegraphics[width=0.24\textwidth]{figures/fig_alg1/v1_v2_theta_A_reward_100_None_single_v2.png}}
\subfigure[$\frac{\Vert{\theta_\bot}\Vert}{\Vert{\theta_\Vert}\Vert}=9$]{\includegraphics[width=0.24\textwidth]{figures/fig_alg1/v1_v2_theta_9_reward_100_1001_single_new_v2.png}}
\subfigure[$b =4 \cdot \textbf{1}_{D}$]{\includegraphics[width=0.24\textwidth]{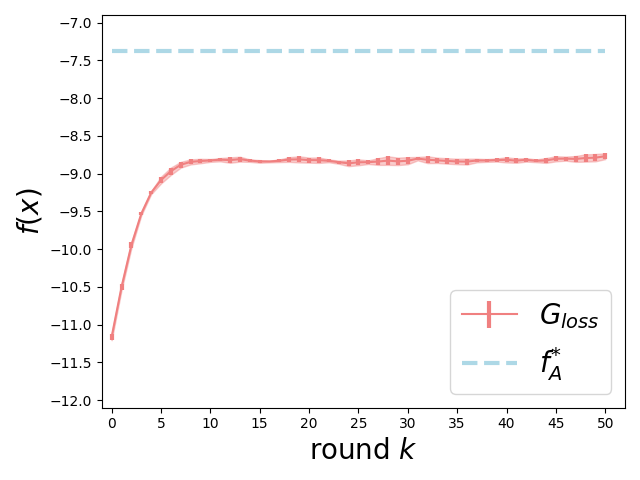}}
\subfigure[$b\sim\cN(4 \cdot \textbf{1}_{D}, 9 \cdot I_D)$]{\includegraphics[width=0.24\textwidth]{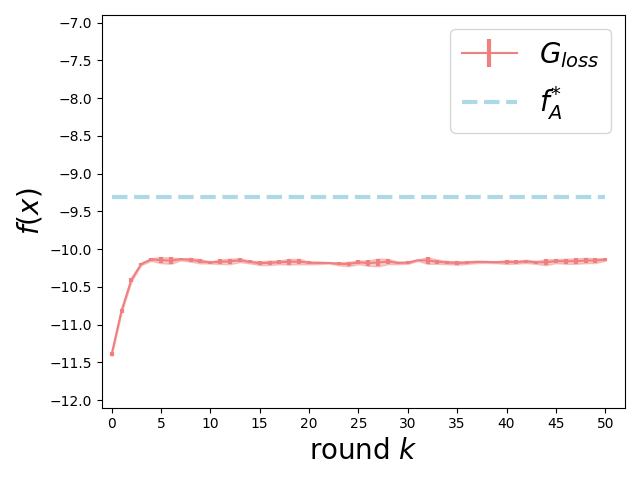}}
\caption
{\small{
\textbf{Convergence of Algorithm~\ref{alg:main} under different objectives}. Objectives are $f_1(x)$ for (a) and (b), and $f_2(x)$ for (c) and (d). Parameters $\theta$ and $b$ are specified as (a) $\theta=A\beta^*$ with $\beta^*$ being sampled from the unit ball in $\RR^d$; (b) the off/on-support ratio of $\theta$ being $9$ (same as Figure~\ref{fig:cmp}); 
(c) and (d) choosing $b$ as a homogeneous vector or randomly from a Gaussian distribution. All the experiments adopt the gradient guidance $\texttt{G}_{loss}$.}}
\label{fig:regularized}
\end{figure}

%%%%%%%%%%%%%%%%%%%%%%%%%%%%%%%%%%%%%
\paragraph{\cref {alg:main_w_update} Converges to Global Optima.} Algorithm~\ref{alg:main_w_update} converges to the maximal value of the objective function $f_1= 10 - (\theta^\top x - 3)^2 $ as shown in \cref{fig:alg2}(a). In \cref{fig:alg2}(b), we visualize the distribution of generated samples of \cref{alg:main} (blue) and \ref{alg:main_w_update} (red), respectively, as the iteration evolves. We see that samples from \cref{alg:main} mostly stay close to the pre-training data distribution (area described by the dotted contour). In constrast, samples of  \cref{alg:main_w_update} move outside the contour, as the diffusion model gets fine-tuned using self-generated data.

\begin{figure}[!htb]
\centering
\subfigure[Convergence of Algorithm~\ref{alg:main_w_update}]{\includegraphics[width=0.32\textwidth]{figures/fig_alg2/rebuttal_v1_v2_theta_9_update_score_revised_reward_100_None_single_v2.png}}
% {}}
\subfigure[Distribution of generated samples]{\includegraphics[width=0.48\textwidth]{figures/fig_alg2/rebuttal_distribution_shift_trial_False_density_v4_1470.png}
}
\caption
{\small{
\textbf{Convergence of Algorithm~\ref{alg:main_w_update}}. Panel (a) plots the objective values achieved by Algorithm~\ref{alg:main_w_update} as a function of iterations. Here $\theta$ is chosen the same as in Figure~\ref{fig:regularized} (b) with off/on-support ratio $\frac{\Vert{\theta_\bot}\Vert}{\Vert{\theta_\Vert}\Vert}=9$. Panel (b) visualizes the distribution of the generated samples of Algorithm~\ref{alg:main_w_update} (red) across the iterations. For comparison, we also visualize the distribution of generated samples of Algorithm~\ref{alg:main} (blue).
}} 
\label{fig:alg2}
\end{figure}

\paragraph{Results for Nonlinear Data Structure.} We apply \cref{alg:main} to data uniformly sampled from a unit ball in $\mathbb{R}^{64}$. The objective reward function is defined as $f(x) = \theta^\top x$, where $\left \lVert \theta \right \rVert = 1$.
The left panel of  \cref{fig:sim_nonlinear} demonstrates that rewards increase and converge when using  \cref{alg:main}. Higher guidance strength $\Delta$ (corresponding to lower regularization) results in a higher convergent reward.
The right panel of \cref{fig:sim_nonlinear} shows that, for the same reward level, gradient guidance achieves a smaller deviation from the unit ball compared to the naive gradient approach. This suggests that gradient guidance can better preserve data structure for nonlinear manifolds.

\begin{figure}[h]
    \centering
\subfigure{\includegraphics[width=0.40\textwidth]{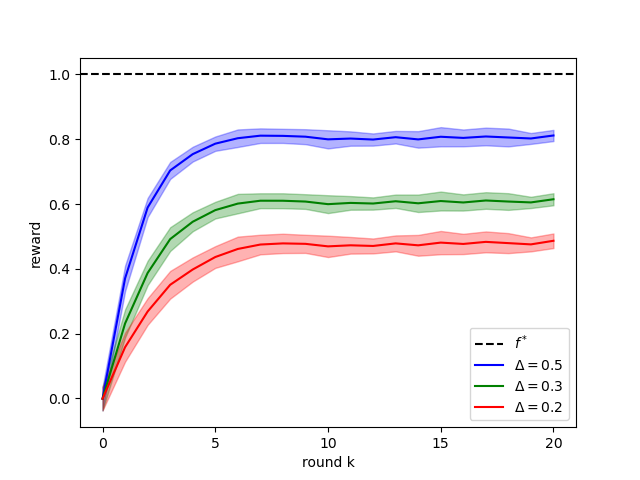} }
\subfigure{\includegraphics[width=0.40\textwidth]{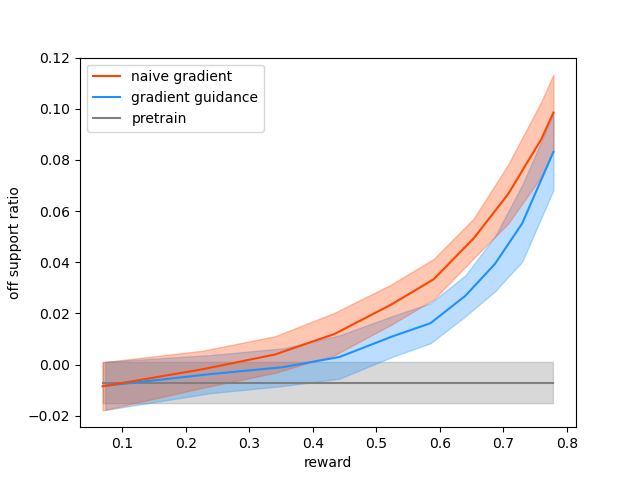}}
    \caption{\textbf{Nonlinear data structure experiment.} We apply \cref{alg:main} to data uniformly sampled from a unit ball in $\mathbb{R}^{64}$. The objective reward function is $f(x) = \theta^\top x$, where $\left \lVert \theta \right \rVert = 1$. \textbf{Left:} Rewards increase and converge with \cref{alg:main}. Higher guidance strength $\Delta$ (lower regularization) results in a higher convergent reward. \textbf{Right:} For the same reward, gradient guidance achieves a smaller deviation from the unit ball compared to the naive gradient. This indicates that gradient guidance can preserve data structure for nonlinear manifolds.}
    \label{fig:sim_nonlinear}
\end{figure}

\subsection{Image Generation}\label{sec:image_detail}

\paragraph{Hyperparameters.} Since tuning parameters $y$ and $\beta(t)$ are both for the strength of guidance,  we can fix one of them. We choose $\beta(t) = 100$ as suggested by \cite{yuan2023reward}, and set a series of $y \in \cbra{2,4,10}$. We run batch size $=  20$ samples parallelly across iterations to evaluate the rewards. The prompt is uniformly sampled from the $1000$ classes of ImageNet \citep{deng2009imagenet}.

\subsection{Time Efficiency}
We summarize the time cost of our experiments on one NVIDIA A100 GPU in \cref{table_time}.
\begin{table}[H]
    \centering
    \begin{tabular}{cc|ccc}
    \toprule
        &\multicolumn{2}{c}{Total runtime (iterations)} & Per iteration & No guidance \\ 
        \midrule  
       Simulation  & {\color{red}3.8 min} (50 iter) & 76 min (1000 iter) & 4.6 s & 2.6 s\\
       Image & \multicolumn{2}{c}{{\color{red}1.3 min} (5 iter)}  & 15.8 s & 4.9 s\\
       %Tang et al. (2024) &  &\\
       \bottomrule
    \end{tabular}
    \vspace{5pt}
    \caption{Runtime Efficiency of Algorithm~1. {\color{red} Red} refers to the total time to converge. No guidance refers to the time for one-time inference of the pre-trained model.}
    \label{table_time}
\end{table}

\end{document}